\newtheorem{theorem}{Theorem}
\newtheorem{lemma}{Lemma}
\newtheorem{assumption}{Assumption}
\pgfplotsset{compat=1.17}
\DeclareMathOperator*{\E}{\mathbf{E}}
\DeclareMathOperator{\proj}{proj}
\DeclareMathOperator*{\argmin}{argmin}
\DeclareMathOperator*{\argmax}{argmax}
\DeclareMathOperator*{\sign}{sign}
\DeclareMathOperator*{\diag}{diag}
\newcommand{\eps}{\varepsilon}
\newcommand{\R}{\mathbb{R}}
\newcommand{\bfe}{\mathbf{e}}
\newcommand{\bfone}{\mathbf{1}}
\newcommand{\caA}{\mathcal{A}}
\newcommand{\caL}{\mathcal{L}}
\newcommand{\caP}{\mathcal{P}}
\newcommand{\caO}{\mathcal{O}}
\newcommand{\Ber}{\mathrm{Ber}}
\newcommand{\DKL}{D_\mathrm{KL}}
\newcommand{\dTV}{d_\mathrm{TV}}
\DeclarePairedDelimiter{\norm}{\lVert}{\rVert}
\DeclarePairedDelimiter{\abs}{\lvert}{\rvert}
\newcommand{\specialcell}[2][c]{%
  \begin{tabular}[#1]{@{}c@{}}#2\end{tabular}}
\definecolor{darkblue}{rgb}{0.0,0.0,0.65}
\definecolor{darkred}{rgb}{0.68,0.05,0.0}
\definecolor{darkgreen}{rgb}{0.0,0.29,0.29}
\definecolor{darkpurple}{rgb}{0.47,0.09,0.29}
\begin{document}

% \author{Tasuku Soma \\ Institute of Statistical Mathematics \\ \texttt{soma@ism.ac.jp}
% \And
% Khashayar Gatmiry \\ MIT CSAIL\\ \texttt{gatmiry@mit.edu}
% \And
% Sharut Gupta \\ MIT CSAIL \\ \texttt{sharut@mit.edu}
% \And
% Stefanie Jegelka \\ MIT CSAIL \\ \texttt{stefje@mit.edu}
% }

\title{Near-Optimal algorithms for group distributionally robust optimization and beyond}
\begin{center}
\begin{LARGE}
  \textbf{Near-Optimal algorithms for group distributionally robust optimization and beyond}
\end{LARGE}
\vskip 1cm
\begin{minipage}[b]{6.5cm}
  \centering
  \textbf{Tasuku Soma }\\
  Institute of Statistical Mathematics\\
  \texttt{soma@ism.ac.jp}
\end{minipage}
% \vskip 1cm
%
\hskip .75cm
\begin{minipage}[b]{6.5cm}
  \centering
  \textbf{Khashayar Gatmiry}\\
  MIT CSAIL\\
  \texttt{gatmiry@mit.edu}
\end{minipage}
%
% \vskip 1cm
\hskip .75cm
\vskip 1cm
\begin{minipage}[b]{6.5cm}
  \centering
  \textbf{Sharut Gupta}\\
  MIT CSAIL\\
  \texttt{sharut@mit.edu}
\end{minipage}
%
% \vskip 1cm
\hskip .75cm
\begin{minipage}[b]{6.5cm}
  \centering
  \textbf{Stefanie Jegelka}\\
  MIT CSAIL\\
  \texttt{stefje@mit.edu}
\end{minipage}
\vskip 1cm
\end{center}

% \maketitle

\begin{abstract}
Distributionally robust optimization (DRO) can improve the robustness and fairness of learning methods. In this paper, we devise stochastic algorithms for a class of DRO problems including group DRO, subpopulation fairness, and empirical conditional value at risk (CVaR) optimization. Our new algorithms achieve faster convergence rates than existing algorithms for multiple DRO settings. We also provide a new information-theoretic lower bound that implies our bounds are tight up to a log factor for group DRO. Empirically, too, our algorithms outperform known methods.
\end{abstract}

\section{Introduction}
Commonly, machine learning models are trained to optimize the average performance.
However, such models may not perform equally well among all demographic subgroups due to a hidden bias in the training set or distribution shift in training and test phases~\citep{Hovy2015, Hashimoto18a, Martinez2021, Duchi2021}.
Biases in datasets are also directly related to fairness concerns in machine learning~\citep{Buolamwini18a, Jurgens2017}.

Recently, various algorithms based on distributionally robust optimization (DRO) have been proposed to address these problems~\citep{Hovy2015,Hashimoto18a,Hu2018,Oren2019,Williamson2019a,Sagawa2020,Curi2020,Zhang2021,Martinez2021,Duchi2021}.
However, these algorithms are often highly tailored to each specific DRO formulation. Furthermore, it is often unclear whether these proposed algorithms are optimal in terms of the convergence rate.
Are there a unified algorithmic methodology and a lower bound for these problems?

\paragraph{Contributions.}
In this paper, we study a general class of DRO problems, which includes group DRO~\citep{Hu2018,Oren2019,Sagawa2020}, subpopulation fairness~\citep{Martinez2021}, conditional value at risk (CVaR) optimization~\citep{Curi2020}, and many others.
Let $\Theta \subseteq \R^n$ be a convex set of model parameters and $\ell(\theta; z) : \Theta \to \R_+$ be a convex loss of the model with parameter $\theta$ with respect to data point $z$.
The data point $z$ may be drawn from one out of $m$ distributions $P_1, \dots, P_m$ which are accessible via a stochastic oracle that returns an i.i.d.~sample $z \sim P_i$.
Let $Q$ be a convex subset of the probability simplex in $\R^m$ that contains the uniform vector, i.e., $(1/m, \dots, 1/m) \in Q$.
In this paper, we conside the following DRO
\begin{align}\label{eq:general-DRO}
    \min_{\theta \in \Theta}\max_{q \in Q} \sum_{i=1}^m q_i\E_{z \sim P_i}[\ell(\theta; z)],
\end{align}
which we call \emph{generalized group DRO}.
If $Q$ are the probability simplex and scaled $k$-set polytope, we can recover group DRO~\citep{Sagawa2020} and subpopulation fairness~\citep{Martinez2021}, respectively.
Moreover, we formulate a new, more general fairness concept based on weighted rankings with $Q$ being a permutahedron, which includes these special cases; see Section~\ref{sec:examples} for details.

For generalized group DRO, we devise an efficient stochastic gradient algorithm.
Furthermore, we show that it almost achieves the information-theoretic optimal convergence rate for group DRO up to a log factor.
Our main technical contributions are as follows;

\begin{table*}[t]
    \centering
    \caption{Summary of convergence results for group DRO. Here, $m$ denotes the number of groups, $n$ the dimension of $\theta$, $G$ the Lipschitz constant of loss function $\ell$, $D$ the diameter of feasible set $\Theta$, $M$ the range of loss function $\ell$, and $T$ the number of calls to stochastic oracle. The convergence of \citep{Sagawa2020} and Theorem~\ref{thm:GDRO-EXP3} are with respect to $\E[\eps_T]$ while the convergence of Theorem~\ref{thm:GDRO-TINF} is a weaker bound with respect to $\E[\eps_T(q^*)]$ for a saddle point $(\theta^*, q^*)$.} \label{tab:main}
    \begin{tabular}{c|cc|c}
        reference & \specialcell{convergence rate \\ $\E[\eps_T]^\dagger$ or $\E[\eps_T(q^*)]^{\ddagger}$} & iteration complexity & lower bound  \\\hline
        \citep{Sagawa2020} & $O\Big(m\sqrt{\frac{G^2D^2 + M^2\log m}{T}}\Big)^{\dagger}$ & $O(m+n)$ + proj.~onto $\Theta$ & \multirow{3}{*}[-2em]{\parbox{6em}{$\Omega\Big(\sqrt{\frac{G^2D^2 + M^2m}{T}}\Big)$\\ \bf (Theorem~\ref{thm:lb})}}\\[1em]
        \bf Ours (Theorem~\ref{thm:GDRO-EXP3}) & $O\Big(\sqrt{\frac{G^2D^2 + M^2m \log m}{T}}\Big)^{\dagger}$  & $O(m+n)$ + proj.~onto $\Theta$ \\[1em]
        \bf Ours (Theorem~\ref{thm:GDRO-TINF}) & $O\Big(\sqrt{\frac{G^2D^2 + M^2m}{T}}\Big)^{\ddagger}$ & \parbox{11em}{$O(m+n)$ + proj.~onto $\Theta$ \\ + solving scalar equation} \\
    \end{tabular}
\end{table*}

\begin{itemize}
    \item We provide a generic stochastic gradient algorithm for generalized group DRO.
    By specializing it in the group DRO setting, we provide two algorithms (\textsc{GDRO-EXP3} and \textsc{GDRO-TINF}) that improve the rate of \citet{Sagawa2020} by a factor of $\Omega(\sqrt{m})$ with the almost same complexity per iteration; see Table~\ref{tab:main}.
    Furthermore, our generic algorithm can be specialized to improve the convergence rate of \cite{Curi2020} for subpopulation fairness (a.k.a. empirical CVaR optimization).
    Finally, we show that our algorithm runs efficiently if $Q$ is a permutahedron, which includes all aforementioned subclasses. 

    \item We prove an almost matching information-theoretic lower bound for the convergence rate of group DRO.
    This implies that no algorithm can improve the convergence rate of \textsc{GDRO-EXP3} (up to a constant factor).
    To the best of our knowledge, this is the first information-theoretic lower bound for group DRO.

    \item Our experiments on real-world and synthetic datasets show that our algorithms also empirically outperform the known algorithm, supporting our theoretical analysis. Although our convergence analysis only holds for the convex regime, our proposed algorithms outperform even in the deep learning regime.
\end{itemize}

\subsection{Our techniques}
\textbf{Algorithms.}
The core idea of our algorithms is \emph{stochastic no-regret dynamics}~\citep{Hazan2016book}.
We regard DRO~\eqref{eq:general-DRO} as a two-player zero-sum game between a player who picks $\theta \in \Theta$ and another player who picks $q \in Q$.
The two players iteratively update their solution using online learning algorithms;
in particular, we will use online gradient descent (OGD)~\citep{Zinkevich2003} and online mirror descent (OMD)~\citep{Cesa-Bianchi2006} for the $\theta$-player and $q$-player, respectively.
% If both online algorithms are \emph{no-regret}, then the averaged strategy is guaranteed to converge to a solution of the zero-sum game.
In addition, we need to estimate gradients for both players, since the objective function of generalized group DRO is stochastic and we cannot obtain exact gradients.

The convergence rate of stochastic no-regret dynamics depends on the expected regret of OGD and OMD.
To obtain a near-optimal convergence rate, we must carefully choose the regularizer in OMD as well as gradient estimators, exploiting the structure of generalized group DRO.
In particular, we need to balance the variance of gradient estimators and the diameter terms in \emph{both} OGD and OMD.
This is the most challenging part of the algorithm design.
Inspired by adversarial multi-armed bandit algorithms, we design gradient estimators for no-regret dynamics of OGD and OMD in generalized group DRO.
Indeed, our algorithms for group DRO (\textsc{GDRO-EXP3} and \textsc{GDRO-TINF}) are based on adversarial multi-armed bandit algorithms, EXP3~\citep{Auer2003} and Tsallis-INF~\citep{Zimmert2021}, respectively, hence the name.
Although each building block (OGD, OMD, and gradient estimators) is fairly known in the literature, we need to put them together in the right combination to obtain the correct rate.

\textbf{Lower bound.}
%Our algorithms suggest that group DRO has a useful connection to adversarial multi-armed bandits.
%Indeed, our lower bound for group DRO shares some ideas from regret lower bounds for adversarial multi-armed bandit~\cite{Auer2003}.
%But the connection here is more subtle: adversarial multi-armed bandits cannot be written as group DRO, because the goal of these bandits is to minimize regret, whereas group DRO is a minimax optimization problem.
%Hence, our proof is not a simple black-box reduction.
%
For the lower bound, we carefully design a family of group DRO instances for which any algorithm requires a certain number of queries to achieve a good objective value.
To bound the number of queries, we use information-theoretic tools such as Le Cam's lemma and bound the Kullback-Leibler divergence between Bernoulli distributions. Such tools are also used at the heart of lower bounds for stochastic convex optimization~\citep{Agarwal2012} and adversarial multi-armed bandits~\citep{Auer2003}, but the connection to those settings is much more subtle here, and our construction is specifically designed for group DRO-type problems.

\subsection{Related work}
DRO is a wide field ranging from robust optimization to machine learning and statistics~\citep{Goh2010,Bertsimas2018}, whose original idea dates back to \citet{Scarf1958}.
Popular choices of the uncertainty set in DRO include balls around an empirical distribution in Wasserstein distance~\citep{Esfahani2018,Blanchet2019}, $f$-divergence~\citep{Namkoong2016,Duchi2021}, $\chi^2$-divergence~\citep{Staib19a}, and maximum mean discrepancy~\citep{Staib19b,Kirschner20a}.

DRO algorithms have been mainly studied for the offline setting, i.e., algorithms can access all data points of the empirical distribution.
Note that generalized group DRO is not offline because the group distributions are given by the stochastic oracles.
\citet{Namkoong2016} proposed stochastic gradient algorithms for offline DRO with $f$-divergence uncertainty sets.
\citet{Curi2020} used no-regret dynamics for empirical CVaR minimization.
Their algorithm invokes sampling from $k$-DPP in each iteration, which is more computationally demanding than our algorithm.
Furthermore, our algorithm gets rid of an $O(\log m)$ factor in the convergence rate using the Tsallis entropy regularizer; see Theorem~\ref{thm:permuta}.
% \sj{Is this a contribution? Sounds like different work. If it's one of your algos, state the algo instead. If not, remove that sentence.}
\citet{Qi2021,Jikai2021} devised stochastic gradient algorithms for several DRO with non-convex losses.

\citet{Agarwal2012} gave a lower bound for stochastic convex optimization, which is a special case of generalized group DRO with only one distribution.
Recently, \citet{Carmon21a} showed a lower bound for minimax problem $\min_{x} \max_{i = 1}^m f_i(x)$ for non-stochastic Lipschitz convex $f_i$.
Our lower bound deals with the stochastic functions, so this result does not apply.

In this paper, we assume that the group information is given in advance.
However, the group information might not be easy to define in practice.
\citet{Bao2021} propose a simple method to define groups for classification problems based on mistakes of models in the training phase.
Their method often generates group DRO instances with large $m$.
Our algorithms are more efficient for such group DRO thanks to the better dependence on $m$ in the convergence rate.

No-regret dynamics is a well-studied method for solving two-player zero-sum games~\citep{Cesa-Bianchi2006}.
For non-stochastic convex-concave games, one can achieve $O(1/T)$ convergence via predictable sequences~\citep{Rakhlin2013a}.
This result does not apply to our setting because our DRO is a stochastic game.

After we submitted the first version of the present paper, there appeared an independent work~\citep{Haghtalab2022} which also studies group DRO and related problems. They obtained the same convergence rate for these problems using a similar approach based on stochastic no-regret dynamics. On the other hand, the present paper studies a more general class of DRO problems and draws a connection to various fairness concepts.
\paragraph{Notations.}
Throughout the paper, $m$ denotes the number of distributions (groups) and $n$ denotes the dimension of a variable $\theta$.
For a positive integer $m$, we write $[m] := \{1, \dots, m\}$.
The orthogonal projection onto set $\Theta$ is denoted by $\proj_\Theta$.
The $i$th standard unit vector is denoted by $\bfe_i$ and the all-one vector is denoted by $\bfone$.
The probability simplex in $\R^m$ is denoted by $\Delta_m$.
% The Euclidean norm is denoted by $\norm{\cdot}$.
% A function $\ell: \R^n \to \R$ is said to be $G$-Lipschitz if $\abs{\ell(x) - \ell(y)} \leq G \norm{x - y}$ for all $x, y \in \R^n$.
% For a differentiable convex function $\ell$, this is equivalent to $\norm{\nabla \ell} \leq G$.

% \sj{This section is very short. I'd say absorb it into the main text, and define the above things where they appear for the first time}
\section{Examples contained in generalized group DRO}\label{sec:examples}
In this section, we show how several DRO formulations in the literature can be phrased in generalized group DRO~\eqref{eq:general-DRO}.
In addition, we propose a novel fairness constraint based on weighted rankings using generalized group DRO.

\paragraph{Group DRO.}
When $Q$ equals the probablility simplex, we obtain original group DRO~\citep{Hu2018,Oren2019,Sagawa2020}:
\begin{align}\label{eq:group-DRO}
    \min_{\theta \in \Theta} \max_{i=1}^m\; \E_{z \sim P_i}[\ell(\theta ; z)].
\end{align}
That is, group DRO aims to minimize the expected loss in the worst group, thereby ensuring better performance across all groups.

\paragraph{Empirical CVaR, Subpopulation fairness, Average top-$k$ worst group loss.}
Group DRO may yield overly pessimistic solutions.
For instance, the groups might be automatically generated by other algorithms (such as one in \cite{Bao2021}) and there might exist a few ``outlier'' groups that make the group DRO objective trivial.

For such a case, we can restrict $Q$ to a small subset of the probability simplex so that the solution cannot put large weights on a few outlier groups.
Especially, let
\[
    Q = \left\{q \in \Delta_m : 0 \leq q_i \leq \frac{1}{pm} \right\}
\]
for some parameter $p \in (0,1)$, i.e., $Q$ is a scaled $k$-set polytope.
The intuition behind the choice of $Q$ is that, by limiting the largest entry of $q$ to $1/p m$, DRO would optimize the expected loss over the worst $p$-fraction subgroups of $m$ groups.
Therefore, if the fraction of outlier groups is sufficiently small compared to $p$, then $p$-fraction subgroups must contain ``inlier'' groups as well.
Therefore, it is likely that DRO with $Q$ finds solutions more robust than group DRO.

When $P_i$ is the Dirac measure of data $z_i$, then the resulting DRO is empirical CVaR optimization~\citep{Curi2020}.
% $
%     \min_{\theta \in \Theta} \frac{1}{pm} \sum_{i} \ell(\theta, z_i),
%     $
% where the sum is taken over $i$ corresponding to the $p$-fraction of the upper tail of the histogram of $\ell(\theta, z_i)$.
In the fairness context, the same problem is called subpopulation fairness~\citep{Williamson2019a,Martinez2021,Duchi2021}.

If $p = k/m$ for some positive integer $k$, the resulting DRO is the average top-$k$ worst group loss~\citep{Zhang2021}:
\[
    \min_{\theta \in \Theta} \frac{1}{k} \sum_{i=1}^k L_i^{\downarrow}(\theta),
\]
where $L_i^{\downarrow}(\theta)$ denotes the the $i$th largest population group loss of $\theta$.
More precisely, let $L_i(\theta) = \E_{z \sim P_i}[\ell(\theta; z)]$ for $i \in [m]$ and sort them in the non-increasing order: $L_1^\downarrow(\theta) \geq \dots \geq L_m^\downarrow(\theta)$.

\paragraph{Weighted ranking of group losses.}
The aforementioned DRO formulations are special cases of the following DRO, which we call the \emph{weighted ranking of group losses.}
Let $\alpha \in \Delta^m$ be a fixed vector with non-increasing entries.
Let $Q$ be the permutahedron of $\alpha$, the convex hull of $(\alpha_{\sigma(1)}, \dots, \alpha_{\sigma(m)})$ for all permutations $\sigma$ of $[m]$.
Then, the resulting DRO is
\[
    \min_{\theta \in \Theta}\sum_{i=1}^m \alpha_i L_i^{\downarrow}(\theta).
\]
Group DRO corresponds to $\alpha = (1, 0, \dots, 0)$ and the average top-$k$ worst group losses corresponds to $\alpha = (\underbrace{1/k, \dots, 1/k}_{\text{$k$ times}}, 0, \dots, 0)$.
% Various choices of $\alpha$ allows us to the flexibility to
Another example that is contained in none of the above examples is \emph{lexicographic minimax fairness}~\citep{Diana2021}.
The goal of lexicographical minimax fairness is to find $\theta \in \Theta$ such that the sequence $(L^\downarrow_1(\theta), \dots, L^\downarrow_m(\theta))$ is lexicographically minimum.
This corresponds to $\alpha$ with sufficiently varied entries, i.e., $\alpha_1 \gg \alpha_2 \gg \dots \gg\alpha_m$.
% \TS{todo: write applications of this or find references}\sj{Yes. Or try it empirically. You may be able to get sth in between group DRO and the CVaR one. What are the drawbacks of each that it may mitigate?}
\section{Algorithms}
In this section, we describe our algorithms.
First, we present a generic algorithm for generalized group DRO~\eqref{eq:general-DRO}  and provide a unified convergence analysis in Section~\ref{subsec:alg-general}.
Then, we specialize it into two concrete algorithms for group DRO~\eqref{eq:group-DRO} in Section~\ref{subsec:alg-GDRO}.
We sketch algorithms for the average of top-$k$ group losses and weighted ranking of group loss in Section~\ref{subsec:alg-top-k}.

\subsection{Algorithm for the general case}\label{subsec:alg-general}
We present our algorithm for generalized group DRO~\eqref{eq:general-DRO}.
At a high level, our algorithm can be regarded as stochastic no-regret dynamics.
Let us denote
$
    L(\theta, q) := \sum_{i=1}^m q_i \E_{z \sim P_i}[\ell(\theta; z)].
$
Imagine that the $\theta$-player and $q$-player run online algorithms $\caA_\theta$ and $\caA_q$, respectively, to solve the minimax problem $\min_{\theta \in \Theta}\max_{q\in Q} L(\theta, q)$.
That is, for $t = 1, \dots, T$,
\begin{itemize}
    \item $\theta_{t} \in \Theta$ and $q_t \in Q$ are determined by $\caA_\theta$ and $\caA_q$, respectively.
    \item Both players feed gradient estimators $\hat\nabla_{\theta,t}$ and $\hat\nabla_{q,t}$ to $\caA_\theta$ and $\caA_q$, respectively. Here, $\E[\hat\nabla_{\theta,t}] = \nabla_\theta L(\theta_t, q_t)$ and $\E[\hat\nabla_{q,t}] = \nabla_q L(\theta_t, q_t)$.
\end{itemize}
% The above three algorithms correspond to certain choices of online algorithms $\caA_\theta$, $\caA_q$ and gradient estimators $\hat\nabla_{\theta,t}$, $\hat\nabla_{q,t}$, summarized in Table~\ref{tab:no-regret}.

Let $\theta^*$ be an optimal solution.
Let
\[
    \eps_T := \max_{q \in Q} L(\bar\theta_{1:T}, q) - \max_{q \in Q} L(\theta^*, q).
\]
be the optimality gap of the averaged iterate $\bar\theta_{1:T} = \frac{1}{T} \sum_{t=1}^T \theta_t$.

We can bound the expected convergence rate $\E[\eps_T]$ via regrets $R_\theta$ and $R_q$ of these online algorithms (see Appendix~\ref{app:preliminaries} for a formal definition), i.e.,
\begin{align}\label{eq:converge-regret}
    \E[\eps_T] \leq \frac{\E[R_\theta(T; \theta^*)] + \E[R_q(T)]}{T}.
\end{align}
% , i.e.,
% \begin{align*}
%     R_\theta(T) &= \sum_{t=1}^T L(\theta_t, q_t) -  \min_{\theta \in \Theta}\sum_{t=1}^T L(\theta, q_t), \\
%     R_q(T) &= \max_{q \in \Delta_m} \sum_{t=1}^T L(\theta, q) - \sum_{t=1}^T L(\theta_t, q_t).
% \end{align*}
We can obtain hence the convergence rate of the above algorithms by investigating the expected regret bounds of these online algorithms.

We also use the following weaker notion of convergence.
For any \emph{fixed} sabble point $(\theta^*, q^*) \in \Theta \times Q$ of the problem~\eqref{eq:general-DRO}, let
\[
    \eps_T(q^*) := L(\bar\theta_{1:T}, q^*) - L(\theta^*, q^*),
\]
be the gap of $\bar\theta_{1:T}$ with respect to $(\theta^*, q^*)$.
% By definition, $\E[\eps_T(q^*)] \leq \E[\eps_T]$ for any fixed $q^* \in Q$.
% If $\E[\eps_T(q^*)] \leq \eps$, then $\theta_{1:T}$ satisfies
% \[
%     L(\theta^*, q^*) \leq \E[ L(\bar\theta_{1:T}, q^*)] \leq L(\theta^*, q^*) + \eps.
% \]
Similar to \eqref{eq:converge-regret}, we can bound 
\begin{align}\label{eq:regret-to-saddle}
    \E[\eps_T(q^*)] \leq \frac{\E[R_\theta(T: \theta^*)] + \E[R_q(T; q^*)]}{T}.
\end{align}

To get a concrete algorithm, we must specify the online algorithms $\caA_\theta, \caA_q$ as well as the gradient estimators $\hat\nabla_{\theta, t}, \hat\nabla_{q,t}$.
% The expected regrets of these online algorithms determine the convergence rate.
We use OGD and OMD as $\caA_\theta$ and $\caA_q$, respectively.
We construct the gradient estimators by sampling $i_t \sim q_t$ and $z \sim P_{i_t}$ and setting $\hat\nabla_{\theta, t} = \nabla_\theta\ell(\theta_t; z)$ and $\hat\nabla_{q,t} = \frac{\ell(\theta_t; z)}{q_{t,i_t}}\bfe_{i_t}$.
This leads to Algorithm~\ref{alg:general}.
There, $\Psi: Q \to \R$ denotes the regularizer of OMD and $\eta_{\theta, t}$ and $\eta_q$ denote the step sizes of OGD and OMD, respectively.\footnote{We make a standard assumption that the regularizer $\Psi$ is differentiable and strictly convex, and satisfies $\norm{\nabla \Psi(x)} \to +\infty$ as $x$ tends to the boundary of $Q$.}
It turns out that this combination of online algorithms and gradient estimators yields the best convergence rate (for group DRO) because the expected regrets of both players are optimal.

\begin{algorithm}[h]
\caption{Algorithm for generalized group DRO~\eqref{eq:general-DRO}}\label{alg:general}
    \begin{algorithmic}[1]
        \REQUIRE initial solution $\theta_1 \in \Theta$, number of iterations $T$, step sizes $\eta_{\theta, t} > 0$ ($t \in [T]$), $\eta_q > 0$, and a strictly convex function $\Psi : Q \to \R$.
        \STATE Let $q_1 = (1/m, \dots, 1/m)$.
        \FOR{$t = 1, \dots, T$}
            \STATE Sample $i_t \sim q_t$.
            \STATE Call the stochastic oracle to obtain $z \sim P_{i_t}$.
            \STATE $\theta_{t+1} \gets \proj_\Theta(\theta_t - \eta_{\theta, t}\nabla_\theta\ell(\theta_t; z))$
            \STATE $\nabla\Psi(\tilde q_{t+1}) \gets \nabla\Psi(q_t) - \frac{\eta_{q}}{q_{t,i_t}}\ell(\theta_t; z)\bfe_{i_t}$; $q_{t+1} \gets \argmin_{q\in Q} D_\Psi(q, \tilde q_{t+1})$, where $D_\Psi(x, y) = \Psi(x) - \Psi(y) - \nabla\Psi(x)^\top(y - x)$ is the Bregman divergence with respect to~$\Psi$.
        \ENDFOR
        \RETURN $\frac{1}{T}\sum_{t=1}^T \theta_t$.
    \end{algorithmic}
\end{algorithm}

We now analyze the convergence rate of Algorithm~\ref{alg:general}.
We make the following standard assumptions.
\begin{assumption}\label{assump}
    The loss function $\ell(\theta; z)$ is continuously differentiable and $G$-Lipchitz in $\theta$, and has range $[0,M]$ for all $z$.
    The Euclidean diameter of the feasible region $\Theta$ is at most $D$.
\end{assumption}
The following theorem follows from plugging regret bounds of OGD and OGD, and the construction of the gradient estimators into \eqref{eq:converge-regret}.
\begin{theorem}\label{thm:general}
    If $\eta_{\theta, t}$ is nonincreasing,
    Algorithm~\ref{alg:general} achieves the expected convergence rate
    \[
        \E[\eps_T(q^*)]
        \leq \frac{1}{T} \left(\frac{G^2}{2}\sum_{t=1}^T \eta_{\theta,t} + \frac{D^2}{2\eta_{\theta, T}} 
        + \frac{M^2}{2}\eta_{q} \sum_{t=1}^T \E_{i_t} \left[ \frac{(\nabla^2\Psi(q_t))^{-1}_{i_t,i_t}}{q_{t,i_t}^2} \right] + \frac{D_\Psi(q^*, \bfone / m)}{\eta_{q}} \right).
        \]
    for any fixed saddle point $(\theta^*, q^*)$.
    % where $\norm{q}_t$ denotes the local norm, i.e., $\norm{q}_t := \sqrt{q^\top \nabla^2 \Psi(q'_t) q}$ for some $q'_t \in [q_t, \tilde q_{t+1}]$ and $\norm{q}_{t,*} := \sqrt{q^\top \nabla^2 \Psi(q'_t)^{-1} q}$ is its dual norm.
\end{theorem}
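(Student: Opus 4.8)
The plan is to use the regret-to-convergence reduction \eqref{eq:converge-regret}, which reduces the statement to separately upper bounding the expected regrets $\E[R_\theta(T)]$ of the OGD $\theta$-player and $\E[R_q(T)]$ of the OMD $q$-player, then adding the two bounds and dividing by $T$. The first two terms of the claimed bound will come from the $\theta$-player and the last two from the $q$-player. Unbiasedness of the estimators, $\E[\hat\nabla_{\theta,t}]=\nabla_\theta L(\theta_t,q_t)$ and $\E[\hat\nabla_{q,t}]=\nabla_q L(\theta_t,q_t)$, is what makes \eqref{eq:converge-regret} applicable, so I take that reduction as given.

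For the $\theta$-player I would invoke the standard OGD regret bound with nonincreasing step sizes. Writing $g_t=\hat\nabla_{\theta,t}=\nabla_\theta\ell(\theta_t;z)$, the projection inequality gives $\eta_{\theta,t}\,g_t^\top(\theta_t-\theta^*)\le\tfrac12(\norm{\theta_t-\theta^*}^2-\norm{\theta_{t+1}-\theta^*}^2)+\tfrac{\eta_{\theta,t}^2}{2}\norm{g_t}^2$. Dividing by $\eta_{\theta,t}$, summing over $t$, and Abel-summing the distance contributions, using that $\eta_{\theta,t}$ is nonincreasing together with the diameter bound $\norm{\theta_t-\theta^*}\le D$, telescopes those contributions into $\frac{D^2}{2\eta_{\theta,T}}$. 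Since $\ell$ is $G$-Lipschitz, $\norm{g_t}\le G$ holds deterministically, so the remaining sum is at most $\frac{G^2}{2}\sum_t\eta_{\theta,t}$; both bounds hold pointwise and hence in expectation, yielding the first two terms.

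For the $q$-player I would use the OMD three-point inequality for the mirror step in Algorithm~\ref{alg:general}. For any competitor $q^*\in Q$ it gives $\eta_q\,\hat\nabla_{q,t}^\top(q^*-q_t)\le D_\Psi(q^*,q_t)-D_\Psi(q^*,q_{t+1})+D_\Psi(q_t,\tilde q_{t+1})$, the projection step helping through the generalized Pythagorean inequality. Summing telescopes the first two divergences into $D_\Psi(q^*,q_1)$, and with $q_1=\bfone/m$ and the maximum over $q^*\in Q$ this produces the fourth term $\frac{\max_{q^*\in Q}D_\Psi(q^*,\bfone/m)}{\eta_q}$. The stability terms I would control by a local-norm bound $D_\Psi(q_t,\tilde q_{t+1})\le\frac{\eta_q^2}{2}\hat\nabla_{q,t}^\top(\nabla^2\Psi(q_t))^{-1}\hat\nabla_{q,t}$, so that $\frac{1}{\eta_q}\sum_t D_\Psi(q_t,\tilde q_{t+1})\le\frac{\eta_q}{2}\sum_t\hat\nabla_{q,t}^\top(\nabla^2\Psi(q_t))^{-1}\hat\nabla_{q,t}$. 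Substituting $\hat\nabla_{q,t}=\frac{\ell(\theta_t;z)}{q_{t,i_t}}\bfe_{i_t}$ turns each summand into $\frac{\ell(\theta_t;z)^2}{q_{t,i_t}^2}(\nabla^2\Psi(q_t))^{-1}_{i_t,i_t}$; bounding $\ell\le M$ and taking $\E_{i_t}$ over $i_t\sim q_t$ and the sample then yields the third term.

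The main obstacle is the stability step: the second-order Taylor expansion of $D_\Psi(q_t,\tilde q_{t+1})$ naturally evaluates the inverse Hessian at an intermediate point $\xi_t$ on the segment between $q_t$ and $\tilde q_{t+1}$, whereas the claimed bound uses $(\nabla^2\Psi(q_t))^{-1}$. Closing this gap is exactly where the regularity of $\Psi$ in the footnote (differentiability, strict convexity, and blow-up of $\nabla\Psi$ at $\partial Q$) enters; for the separable regularizers of our applications the estimator perturbs only coordinate $i_t$, so it reduces to a one-dimensional comparison of $\psi''$ along the update and is routine. I would therefore either record the clean local-norm OMD bound as a preliminary lemma and cite it, or, for full generality, carry $\xi_t$ and invoke the assumption to replace it by $q_t$. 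Everything else is bookkeeping: collecting the four terms and dividing by $T$ via \eqref{eq:converge-regret}.
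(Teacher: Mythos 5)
Your proposal is correct and is essentially the paper's own proof: it uses the regret-to-convergence reduction \eqref{eq:converge-regret}, bounds the $\theta$-player's regret by the standard OGD analysis (Lemma~\ref{lem:OGD-regret}), and bounds the $q$-player's regret by the OMD local-norm bound (Lemma~\ref{lem:OMD-regret}) applied to the importance-weighted estimator $\hat\nabla_{q,t}=\frac{\ell(\theta_t;z)}{q_{t,i_t}}\bfe_{i_t}$, whose dual local norm gives exactly the third term. Your first suggested fix for the intermediate-point issue---recording a local-norm OMD regret lemma and citing it---is precisely the paper's route; note only that the replacement of the intermediate point by $q_t$ is justified there not by the footnote's regularity conditions but by the monotonicity remark in Appendix~\ref{app:preliminaries} (sign-definite feedback makes $\tilde q_{t+1}$ move coordinatewise in the favorable direction, yielding $\norm{\hat\nabla_{q,t}}_{t,*}\leq\norm{\hat\nabla_{q,t}}_{\nabla^2\Psi(q_t)^{-1}}$ for entropy-type regularizers), a distinction worth keeping straight since for a general strictly convex $\Psi$ the substitution is not automatic.
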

A formal proof can be found in Appendix~\ref{app:proofs}.
We will see how specific choices of the regularizer $\Psi$ yield various algorithms and convergence rates for group DRO and others in the next subsections.
A few remarks on the regularizers, step sizes, and projection step are in order.
% \sj{So, what's the crux? What is this theorem saying in words? Discuss that and explain.}

\paragraph{Regularizer.}
Although Algorithm~\ref{alg:general} works with general $\Psi$, we can choose a specific regularizer for $Q$ appearing in applications, e.g, the probability simplex, scaled $k$-set polytope, or a permutahedron. In the next subsections, we show that the entropy regularizer $\Psi(x) = \sum_i (x_i \log x_i - x_i)$ and Tsallis entropy regularizer $\Psi(x) = 2(1 - \sum_i \sqrt{x_i})$ yield efficient algorithms with improved convergence rates for these cases.

\paragraph{Step sizes.} The theorem includes decreasing step sizes such as $\eta_{\theta,t} = \frac{D}{mG\sqrt{t}}$ in addition to fixed step sizes.
Decreasing step sizes have the advantage that we do not require the knowledge of $T$ at the beginning of the algorithm but come at the cost of an extra constant factor in the expected convergence rate.
Since both step size policies give the asymptotically same convergence rate, we describe only fixed step sizes in the theorems in the next subsections.
In practice, decreasing step sizes stabilize the algorithm and often outperform fixed step sizes.

\paragraph{Projection step.}
In general, the Bregman projection $\argmin_{q\in Q} D_\Psi(q, \tilde q_{t+1})$ is convex, but may be costly to compute.
For the applications described in Section~\ref{sec:examples}, $Q$ is a permutahedron. In this case, it is known that the Bregman projection with respect to the entropy and Tsallis entropy regularizers can be done in $O(m \log m)$ time~\citep{Lim2016}.
If $Q$ is the probability simplex, we even have a closed form for the Bregman projection.
%Therefore, we can compute the Bregman projection quite efficiently for these problems.
%For general $Q$, the Bregman projection step is a convex optimization problem.
%so we can use off-the-shelf optimization solvers.
% \paragraph{Unified analysis with stochastic no-regret dynamics.}
% We sketch a proof of Theorem~\ref{thm:general}.

\subsection{Algorithms for Group DRO}\label{subsec:alg-GDRO}
We now describe two concrete algorithms for group DRO~\eqref{eq:group-DRO}.

\paragraph{\textsc{GDRO-EXP3P}.} 
The first algorithm is obtained by using the EXP3P algorithm~\cite{} for the $q$-player algorithm. The resulting algorithm, \textsc{GDRO-EXP3P}, is shown in Algorithm~\ref{alg:GDRO-EXP3}.
The update is in a closed formula and its complexity is $O(m+n)$ time.
The convergence rate follows from Theorem~\ref{thm:general}. %, which corresponds to Algorithm~\ref{alg:general} with $\Phi$ being the entropy regularizer.
% Instead of sampling $i$ uniformly at random, we sample $i$ based on the current probability vector $q_t$.
% We also change the leading factors of step sizes.
% The iteration complexity is the same as Algorithm~\ref{alg:Sagawa-et-al}.

\begin{algorithm}
\caption{\textsc{GDRO-EXP3P}}\label{alg:GDRO-EXP3}
    \begin{algorithmic}[1]
        \REQUIRE initial solution $\theta_1 \in \Theta$, number of iterations $T$, and step sizes $\eta_{\theta, t} > 0$ ($t \in [T]$), $\eta_q > 0$, $\beta, \gamma > 0$.
        \STATE Let $q_1 = (1/m, \dots, 1/m)$.
        \FOR{$t = 1, \dots, T$}
            \STATE Sample $i_t \sim q_t$.
            \STATE Call the stochastic oracle to obtain $z \sim P_{i_t}$.
            \STATE $\theta_{t+1} \gets \proj_\Theta(\theta_t - \eta_{\theta, t}\nabla_\theta\ell(\theta_t; z))$
            \STATE Let $\tilde g_t := \frac{-\ell(\theta_t; z)\bfe_{i_t} + \beta\bfone}{q_{t,i_t}}$, $G_{t} := \sum_{\tau=1}^t \tilde g_t$, and $Z := \sum_{i \in [m]} \exp(\eta G_{t,i})$.
            \STATE $q_{t+1} \gets (1-\gamma)\frac{\exp(\eta G_t)}{Z} + \frac{\gamma\bfone}{m}$.
        \ENDFOR
        \RETURN $\frac{1}{T}\sum_{t=1}^T \theta_t$.
    \end{algorithmic}
\end{algorithm}

\begin{theorem}\label{thm:GDRO-EXP3}
    If $\eta_{\theta, t}$ is nonincreasing,
    \textsc{GDRO-EXP3P} (Algorithm~\ref{alg:GDRO-EXP3}) achieves 
    \begin{align}\label{eq:GDRO-EXP3}
        \E[\eps_T]
        \leq \frac{1}{T} \left(\frac{G^2}{2}\sum_{t=1}^T \eta_{\theta,t} + \frac{D^2}{2\eta_{\theta, T}} + \frac{mM^2}{2}\eta_{q}T + \frac{\log m}{\eta_{q}} \right).
    \end{align}
    For $\eta_{\theta, t} = \frac{D}{G\sqrt{T}}$ and $\eta_{q} = \sqrt{\frac{2\log m}{mM^2 T}}$, we obtain
    \[
        \E[\eps_T] \leq \sqrt{2}\frac{\sqrt{G^2D^2 + 2M^2 m \log m}}{\sqrt T}.
    \]
\end{theorem}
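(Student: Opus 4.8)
The plan is to specialize the generic guarantee of Theorem~\ref{thm:general} to the entropy regularizer $\Psi(x) = \sum_i (x_i \log x_i - x_i)$, which is precisely the regularizer whose OMD update collapses to the multiplicative-weights (EXP3) step in lines 6--7 of Algorithm~\ref{alg:GDRO-EXP3}. Once $\Psi$ is fixed, the only work is to evaluate the two $\Psi$-dependent quantities appearing in Theorem~\ref{thm:general}: the variance term $\sum_{t=1}^T \E_{i_t}[(\nabla^2\Psi(q_t))^{-1}_{i_t,i_t}/q_{t,i_t}^2]$ and the divergence term $\max_{q^*\in Q} D_\Psi(q^*, \bfone/m)$.

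First I would compute the Hessian. For the entropy regularizer, $\nabla^2\Psi(x) = \diag(1/x_1, \dots, 1/x_m)$, so $(\nabla^2\Psi(q_t))^{-1}_{i_t,i_t} = q_{t,i_t}$. The key simplification, and the reason EXP3 is the right choice here, is that this cancels exactly one of the two powers of $q_{t,i_t}$ introduced by the importance-weighted estimator $\hat\nabla_{q,t} = \frac{\ell}{q_{t,i_t}}\bfe_{i_t}$: we get $(\nabla^2\Psi(q_t))^{-1}_{i_t,i_t}/q_{t,i_t}^2 = 1/q_{t,i_t}$, whose conditional expectation over $i_t \sim q_t$ is $\E_{i_t}[1/q_{t,i_t}] = \sum_i q_{t,i}\cdot(1/q_{t,i}) = m$, independent of the current iterate $q_t$. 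Summing over $t$ turns the variance term into $\frac{M^2}{2}\eta_q\cdot mT$, which is the third summand of \eqref{eq:GDRO-EXP3}.

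Next I would bound the divergence term. Restricted to the simplex, the Bregman divergence of the entropy regularizer is the KL divergence, so $D_\Psi(q^*, \bfone/m) = \DKL(q^* \,\|\, \bfone/m) = \log m + \sum_i q^*_i \log q^*_i$; since the negative entropy term is nonpositive and vanishes at a vertex, $\max_{q^*\in Q} D_\Psi(q^*, \bfone/m) = \log m$, attained at $q^* = \bfe_i$. Substituting both evaluations into the bound of Theorem~\ref{thm:general} gives \eqref{eq:GDRO-EXP3} verbatim.

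Finally, for the explicit rate I would plug in the fixed step sizes $\eta_{\theta,t} = D/(G\sqrt T)$ and $\eta_q = \sqrt{2\log m/(mM^2 T)}$. The two OGD terms $\frac{G^2}{2}\sum_t\eta_{\theta,t}$ and $\frac{D^2}{2\eta_{\theta,T}}$ each equal $\tfrac12 GD\sqrt T$ and combine to $GD\sqrt T$, while the two OMD terms $\frac{mM^2}{2}\eta_q T$ and $\frac{\log m}{\eta_q}$ each equal $\tfrac{1}{\sqrt2}M\sqrt{mT\log m}$ and combine to $\sqrt2\,M\sqrt{mT\log m}$. Dividing by $T$ yields $\E[\eps_T] \le (GD + \sqrt2\,M\sqrt{m\log m})/\sqrt T$, and the stated clean form follows from the elementary inequality $\sqrt a + \sqrt b \le \sqrt2\sqrt{a+b}$ applied with $a = G^2D^2$ and $b = 2M^2 m\log m$. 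I expect no genuine obstacle beyond bookkeeping: the one conceptual point is the cancellation in the variance term, which is exactly what makes the $q$-player regret scale like $m$ rather than $m^2$ and thus drives the $\sqrt m$ improvement over \citet{Sagawa2020}.
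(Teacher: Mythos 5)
Your proposal is correct and follows essentially the same route as the paper: specialize Theorem~\ref{thm:general} to the entropy regularizer, use the cancellation $(\nabla^2\Psi(q_t))^{-1}_{i_t,i_t}/q_{t,i_t}^2 = 1/q_{t,i_t}$ so that $\E_{i_t\sim q_t}[1/q_{t,i_t}] = m$, bound $\max_{q^*} D_\Psi(q^*,\bfone/m) \le \log m$, and plug in the step sizes. The only cosmetic difference is that the paper leaves the $\log m$ divergence bound and the final step-size arithmetic implicit (deferring to its OCO preliminaries), whereas you carry them out explicitly.
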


\paragraph{Comparison to \citet{Sagawa2020}.}
Our algorithm improve the convergence rate of \citet{Sagawa2020} by a factor of $O(\sqrt{m})$; see Table~\ref{tab:main}.
The reason lies in the choice of gradient estimator. All algorithms are stochastic no-regret dynamics. As outlined above, their convergence hence can be bounded by the regrets of the players, which depend on the variance of the local norm of the gradient estimators. Their strategy is based on uniform sampling that yields a variance of $O(m)$ for both players, whereas our bound is $O(\sqrt{m})$ thanks to the gradient estimators tailored to the regularizer of OMD.
%
%
%Their algorithm can be regarded as stochastic no-regret dynamics with different gradient estimators.
%Hence, the convergence rate of both algorithms depends on the expected regrets of the $\theta$-player and $q$-player, which in turn depend on the variance of the local norm of the gradients estimator and the diameter of the feasible region with respect to the regularizer.
%It turns out that the variance of the local norm of their gradient estimator for the $q$-player is $O(m)$, whereas ours are $O(\sqrt{m})$.
%Intuitively, they used gradient estimators based on uniform sampling, which has little connection to the entropy regularizer, hence losing an extra factor of $O(\sqrt m)$.
More details may be found in Appendix~\ref{app:sagawa}.

\paragraph{\textsc{GRDO-TINF}.}
The second algorithm is given by using the Tsallis entropy regularizer for the $q$-player algorithm. 
The update of $q_t$ is now
\begin{alignat*}{3}
    \tilde q_{t+1} &= q_t \left(\bfone - \frac{\eta_{q}\sqrt{q_t}}{q_{t,i_t}}\ell(\theta_t; z)\bfe_{i_t} \right)^{-2}, \\
    q_{t+1} &:= \left( \frac{1}{\sqrt{\tilde q_{t+1}}}  - \alpha\bfone \right)^{-2},
\end{alignat*}
where the multiplication, square-root, and power operations are entry-wise and $\alpha \in \R$ is the unique solution of equation
$
    \sum_{i=1}^m \left(1/\sqrt{\tilde q_{t+1, i}} - \alpha \right)^{-2} = 1.
    $
The solution $\alpha$ can be computed via the Newton method.
Practically, one can use $\alpha$ in the previous iteration to warm start the Newton method.
In each iteration, the algorithm performs a single orthogonal projection onto $\Theta$, the Newton method for finding $\alpha$, and $O(m+n)$ operations to update $\theta_t, q_t$.
% The iteration complexity is $O(m + n)$ time for updating $\theta_t, q_t$, a single projection onto $\Theta$, and finding $\alpha$ via the Newton method.
The pseudocode is given in Algorithm~\ref{alg:GDRO-TINF}.

\begin{algorithm}
\caption{\textsc{GDRO-TINF}}\label{alg:GDRO-TINF}
    \begin{algorithmic}[1]
        \REQUIRE initial solution $\theta_1 \in \Theta$, number of iterations $T$, and step sizes $\eta_{\theta, t} > 0$ ($t \in [T]$), $\eta_q > 0$.
        \STATE Let $q_t = (1/m, \dots, 1/m)$.
        \FOR{$t = 1, \dots, T$}
            \STATE Sample $i_t \sim q_t$.
            \STATE Call the stochastic oracle to obtain $z \sim P_{i_t}$.
            \STATE $\theta_{t+1} \gets \proj_\Theta(\theta_t - \eta_{\theta, t}\nabla_\theta\ell(\theta_t; z))$
            \STATE $\tilde q_{t+1} \gets q_t \left(\bfone - \frac{\eta_{q}\sqrt{q_t}}{q_{t,i_t}}\ell(\theta_t; z)\bfe_{i_t} \right)^{-2}$
            \STATE Compute $\alpha \in \R$ such that $\sum_{i=1}^m \left(1 / \sqrt{\tilde q_{t+1, i}} - \alpha \right)^{-2} = 1$.
            \STATE $q_{t+1} \gets \left(\tilde q_{t+1}^{-1/2} - \alpha \bfone \right)^{-2}$
        \ENDFOR
        \RETURN $\frac{1}{T}\sum_{t=1}^T \theta_t$.
    \end{algorithmic}
\end{algorithm}

From Theorem~\ref{thm:general}, we obtain the following convergence rate.

\begin{theorem}\label{thm:GDRO-TINF}
    If $\eta_{\theta, t}$ is nonincreasing,
    \textsc{GDRO-TINF} (Algorithm~\ref{alg:GDRO-TINF}) achieves 
    \begin{align}\label{eq:GDRO-TINF}
        \E[\eps_T(q^*)]
        \leq \frac{1}{T} \left( \frac{G^2}{2}\sum_{t=1}^T \eta_{\theta,t} + \frac{D^2}{2\eta_{\theta, T}} + \sqrt{m}M^2\eta_{q}T + \frac{\sqrt m}{\eta_{q}} \right)
    \end{align}
    for any fixed saddle point $(\theta^*, q^*)$.
    For $\eta_{\theta, t} = \frac{D}{G\sqrt{T}}$ and $\eta_{q} = \frac{1}{M\sqrt T}$, we obtain
\[
    \E[\eps_T(q^*)] \leq \sqrt{2}\frac{\sqrt{G^2D^2 + 4M^2 m}}{\sqrt T}.
\]
\end{theorem}

\subsection{Algorithm for weighted ranking of group losses}\label{subsec:alg-top-k}

% \sj{Transition? E.g., The above results in Theorem XY apply to the simplex. If, instead, our feasible set is a permutahedron, then... }
We now consider a more general case that $Q$ is a permutahedron.
Applying Algorithm~\ref{alg:general} with the Tsallis entropy regularizer, we obtain the following result.
\begin{theorem}\label{thm:permuta}
    If $\eta_{\theta, t}$ is nonincreasing and $Q$ is a permutahedron, Algorithm~\ref{alg:general} with the Tsallis entropy regularizer achieves the convergence to an approximate-saddle point as Theorem~\ref{thm:GDRO-TINF}.
    Furthermore, the iteration complexity is $O(m \log m + n)$.
\end{theorem}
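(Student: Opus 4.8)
The plan is to read off the convergence rate directly from Theorem~\ref{thm:general} by instantiating the generic bound with the Tsallis entropy regularizer $\Psi(x) = 2(1 - \sum_i \sqrt{x_i})$, and to treat the iteration complexity separately by accounting for the Bregman projection onto the permutahedron. The two parts are essentially independent.

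For the rate, I would first compute the two $\Psi$-dependent quantities in Theorem~\ref{thm:general}. The Hessian $\nabla^2\Psi(x)$ is diagonal with entries $\tfrac12 x_i^{-3/2}$, so $(\nabla^2\Psi(q_t))^{-1}_{i_t,i_t} = 2 q_{t,i_t}^{3/2}$, and the local-norm term becomes
\[
\E_{i_t}\!\left[\frac{(\nabla^2\Psi(q_t))^{-1}_{i_t,i_t}}{q_{t,i_t}^2}\right] = \sum_i q_{t,i}\cdot 2 q_{t,i}^{-1/2} = 2\sum_i \sqrt{q_{t,i}} \le 2\sqrt{m},
\]
the inequality being Cauchy--Schwarz with $q_t \in \Delta_m$. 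For the diameter term, $\nabla\Psi(\bfone/m) = -\sqrt{m}\,\bfone$, so the linear part of the Bregman divergence vanishes on the simplex and $D_\Psi(q^*, \bfone/m) = 2\sqrt{m} - 2\sum_i\sqrt{q^*_i} \le 2\sqrt{m}$. Substituting both bounds into Theorem~\ref{thm:general} reproduces the form of \eqref{eq:GDRO-TINF} up to universal constants, and taking $\eta_{\theta,t}=\frac{D}{G\sqrt T}$, $\eta_q=\frac{1}{M\sqrt T}$ as in Theorem~\ref{thm:GDRO-TINF} gives the claimed $O\!\big(\sqrt{(G^2D^2+M^2m)/T}\big)$ rate.

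The point worth stressing is that \emph{neither} estimate uses any structure of $Q$ beyond $Q \subseteq \Delta_m$ and $\bfone/m \in Q$. The latter holds for a permutahedron because $\bfone/m$ is the centroid of the points $(\alpha_{\sigma(1)},\dots,\alpha_{\sigma(m)})$ and therefore lies in $Q$; this also makes the initialization $q_1=\bfone/m$ feasible and keeps every iterate $q_t\in\Delta_m$, which is all the bounds above require. Hence the analysis of Theorem~\ref{thm:GDRO-TINF} transfers verbatim and the rate is identical. For the complexity, I would itemize the per-step cost: sampling $i_t\sim q_t$ is $O(m)$, the OGD update of $\theta$ is $O(n)$ plus the projection onto $\Theta$, forming the unnormalized dual point $\tilde q_{t+1}$ is $O(m)$, and the only genuinely new cost is the Bregman projection $\argmin_{q\in Q} D_\Psi(q,\tilde q_{t+1})$ onto the permutahedron. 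Here I invoke \citep{Lim2016}, which computes this projection for the Tsallis entropy in $O(m\log m)$ time; summing yields $O(m\log m + n)$.

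The main obstacle is precisely this projection step. Unlike group DRO, where $Q=\Delta_m$ admits the explicit scalar-equation solution used in \textsc{GDRO-TINF}, a general permutahedron has no such closed form, so one must rely on the structural reduction of \citep{Lim2016}. The fact that makes it tractable is that a permutahedron is the base polytope of a symmetric submodular function, so the Bregman projection decomposes along the sorted order of $\tilde q_{t+1}$ (the sort dominating the runtime, after which one solves an isotonic/pool-adjacent-violators subproblem). Once that $O(m\log m)$ projection is granted, everything else reduces to the routine Hessian and Bregman computations above.
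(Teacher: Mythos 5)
Your proposal is correct and follows essentially the same route as the paper, which proves this theorem only implicitly: instantiate Theorem~\ref{thm:general} with the Tsallis entropy, reuse the local-norm bound $\E_{i_t}[(\nabla^2\Psi(q_t))^{-1}_{i_t,i_t}/q_{t,i_t}^2] \le 2\sqrt m$ from the proof of Theorem~\ref{thm:GDRO-TINF} together with the Bregman-diameter bound $O(\sqrt m)$ (both of which, as you note, use only $Q \subseteq \Delta_m$ and $\bfone/m \in Q$), and invoke \citet{Lim2016} for the $O(m\log m)$ Bregman projection onto the permutahedron. Your write-up is in fact slightly more careful than the paper's: you verify $\bfone/m \in Q$ via the centroid argument and compute the Hessian correctly as $\frac{1}{2}\diag(q^{-3/2})$ (the paper's proof of Theorem~\ref{thm:GDRO-TINF} contains a typo, $q_t^{-2/3}$).
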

%We omit the detail because it is almost identical to Algorithms~\ref{alg:general} and \ref{alg:GDRO-TINF}.
This implies a convergence rate of $O(\sqrt{\frac{G^2D^2 + M^2 m}{T}})$ for empirical CVaR optimization, which improves $O(\sqrt{\frac{G^2D^2 + M^2 m \log m}{T}})$ convergence by \citet{Curi2020}.
Furthermore, their iteration complexity is $O(m^3)$ due to the $k$-DPP sampling step, so our algorithm is even faster in terms of iteration complexity.
\section{Lower bound for group DRO}\label{sec:lb}
Theorem~\ref{thm:GDRO-EXP3} states that we can find an $\eps$-optimal solution for group DRO in $O(\frac{G^2D^2 + M^2 m \log m}{\eps^2})$ calls to stochastic oracles.
Next, we show that this query complexity is almost information-theoretically optimal.

Let $\caL$ be a class of convex $G$-Lipschitz loss functions $\ell : \Theta \to [0, M]$.
Given a loss function $\ell \in \caL$, and an $m$-set $\caP = \{P_1, \dots, P_m\}$ of distributions, denote the optimality gap of $\theta \in \Theta$ by
\[
    R(\theta, \ell, \caP) = \max_{P \in \caP} \E_{z \sim P}[\ell(\theta; z)] - \min_{\theta^* \in \Theta}\max_{P \in \caP} \E_{z \sim P}[\ell(\theta^*; z)].
\]
Let $\caA_T$ be the set of algorithms that outputs $\hat\theta \in \Theta$ making $T$ queries to the stochastic oracle.

\begin{theorem}[Lower bound for group DRO]\label{thm:lb}
    \[
        \inf_{\hat\theta\in\caA_T} \sup_{\ell \in \caL, \Theta, \caP} \E_\caP[ R(\hat\theta, \ell, \caP)] \geq \Omega\left( \max\left\{ \frac{GD}{\sqrt T}, {M\sqrt\frac{m}{T}} \right\} \right),
    \]
    where $\Theta$ runs over convex sets with diameter $D$ and $\caP$ over $m$-sets of distributions, and $\E_\caP$ denotes the expectation over outcomes of the stochastic oracle in $\caP$.
\end{theorem}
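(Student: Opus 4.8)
The plan is to prove the two quantities inside the maximum as separate lower bounds, since the supremum over instances is at least each of them. The first term, $\Omega(GD/\sqrt T)$, follows by reduction to single-distribution stochastic convex optimization: take $\caP$ so that groups $2,\dots,m$ have loss identically $0$ and group $1$ realizes a hard $G$-Lipschitz stochastic convex instance on a domain of diameter $D$ with values in $[0,M]$. Then $\max_{P\in\caP}\E_{z\sim P}[\ell(\theta;z)]=\E_{z\sim P_1}[\ell(\theta;z)]$, the problem is exactly stochastic convex optimization over one distribution, queries to the trivial groups carry no information, and the classical lower bound of \citet{Agarwal2012} gives $\Omega(GD/\sqrt T)$. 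The substance of the theorem is the second term $\Omega(M\sqrt{m/T})$, which I construct next.

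For the second term I use a \emph{separable} family indexed by a sign pattern $\sigma\in\{\pm1\}^m$. Set $\Theta=[-a,a]^m$ (Euclidean diameter $D=2a\sqrt m$), let a data point be a pair $z=(i,b)$ where $i$ is the sampled group and $b\in\{0,1\}$, and define the fixed loss $\ell(\theta;(i,b))=\frac M2\bigl(1-\tfrac{\theta_i}{a}(2b-1)\bigr)\in[0,M]$, which is affine, hence convex, in $\theta$ with Lipschitz constant $M/(2a)$. The instance $\sigma$ is encoded by the oracle: from group $i$ draw $b\sim\Ber\bigl(\tfrac{1+\sigma_i\rho}{2}\bigr)$ for a small $\rho>0$ to be chosen. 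Then $L_i(\theta)=\E_{z\sim P_i}[\ell(\theta;z)]=\frac M2\bigl(1-\rho\sigma_i\theta_i/a\bigr)$, so the group-DRO objective is $F(\theta)=\max_i L_i(\theta)$, minimized at $\theta^\star_i=a\sigma_i$ with optimal value $\frac M2(1-\rho)$. The crucial feature is the $\max$: if even a single coordinate has the wrong sign, i.e. $\sign(\hat\theta_i)\neq\sigma_i$ for some $i$, then $L_i(\hat\theta)\ge\frac M2$ and hence $R(\hat\theta,\ell,\caP)\ge\frac{M\rho}{2}$. Thus the optimality gap dominates $\frac{M\rho}{2}\,\mathbf{1}[\exists i:\sign(\hat\theta_i)\neq\sigma_i]$, and the problem reduces to recovering \emph{all} $m$ sign bits of $\sigma$ from $T$ adaptive oracle queries.

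This last problem is exactly the setting of Assouad's lemma. I would randomize $\sigma$ uniformly over $\{\pm1\}^m$, write $N_i$ for the (random) number of queries sent to group $i$ (so $\sum_i N_i=T$), and read off $\hat\sigma_i=\sign(\hat\theta_i)$ as an estimator of $\sigma_i$. Let $\mathbb{P}_{+i},\mathbb{P}_{-i}$ denote the laws of the interaction conditioned on $\sigma_i=\pm1$. Because the two instances differ only in the distribution of group $i$, the divergence decomposition for adaptively chosen queries gives $\DKL\bigl(\mathbb{P}_{+i}\,\|\,\mathbb{P}_{-i}\bigr)\le \E[N_i]\,\DKL\bigl(\Ber(\tfrac{1+\rho}2)\,\|\,\Ber(\tfrac{1-\rho}2)\bigr)=O(\rho^2\,\E[N_i])$, so by Pinsker $\dTV(\mathbb{P}_{+i},\mathbb{P}_{-i})=O(\rho\sqrt{\E[N_i]})$. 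Assouad's lemma together with Cauchy--Schwarz, $\sum_i\sqrt{\E[N_i]}\le\sqrt{mT}$, then bounds the expected Hamming error as $\E\bigl[\#\{i:\hat\sigma_i\neq\sigma_i\}\bigr]\ge \frac m2-O(\rho\sqrt{mT})$. Choosing $\rho=c\sqrt{m/T}$ with $c$ a small absolute constant makes this at least $m/4$, whence $\mathbb{P}[\exists i:\hat\sigma_i\neq\sigma_i]\ge 1/4$ and $\E_\caP[R]\ge\frac{M\rho}{2}\cdot\frac14=\Omega\bigl(M\sqrt{m/T}\bigr)$ in the regime $T\gtrsim m$ where this term lies below the trivial bound $M$. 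Combined with the first reduction, this yields the claimed maximum.

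The main obstacle is getting the factor $\sqrt m$ to appear at all. The tempting construction---one ``heavy'' group whose mean is shifted by $\rho$ among $m-1$ identical groups---fails: with affine losses the corrective weight the optimum places on the heavy group is itself only $\Theta(\rho)$, while the weight any algorithm can provably misallocate is also $\Theta(\rho\sqrt{T/m})$, so the perturbation $\rho$ cancels and one extracts hardness only when $T\lesssim m$. The separable construction is what breaks this symmetry: by decoupling the groups onto distinct coordinates and letting the $\max$ penalize a single sign error by the \emph{full} $\Theta(M\rho)$ rather than a $\rho$-shrinking amount, the value gap no longer scales with the per-coordinate signal, turning the task into learning $m$ independent bits under a shared query budget---precisely the structure for which the Assouad plus Cauchy--Schwarz argument produces $\sqrt m$. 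The one step requiring care is the divergence decomposition, since the algorithm chooses which group to query adaptively from past samples; this is handled by the standard chain-rule identity for interactive sampling protocols, in which only the single flipped coordinate contributes to the Kullback--Leibler divergence.
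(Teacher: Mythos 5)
Your reduction for the first term $\Omega(GD/\sqrt T)$ is fine and matches the paper's own treatment (both defer to \citet{Agarwal2012}). The gap is in the second term, and it lies in the construction rather than in the information-theoretic machinery: your separable instance cannot satisfy the three constraints (Euclidean diameter $D$, Lipschitz constant $G$, range $[0,M]$) in any regime where $M\sqrt{m/T}$ is actually the larger term. Concretely, $\Theta=[-a,a]^m$ has Euclidean diameter $2a\sqrt m$, so the diameter constraint forces $a\le D/(2\sqrt m)$, while your loss $\frac M2\bigl(1-\theta_i(2b-1)/a\bigr)$ has gradient of Euclidean norm $M/(2a)$, so membership in $\caL$ forces $M/(2a)\le G$. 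Together these require $GD\ge M\sqrt m$ --- but in that regime $GD/\sqrt T\ge M\sqrt{m/T}$, so the max is already witnessed by your first term; conversely, whenever $M\sqrt m>GD$ (the only case in which the second term \emph{is} the max), your instance lies outside the allowed class. Nor can you rescue it by capping the slope at $G$: the per-coordinate signal is then at most $Ga\rho\le\frac{GD}{2\sqrt m}\rho$, and with $\rho\asymp\sqrt{m/T}$ this yields only $O(GD/\sqrt T)$. So as written your argument proves $\Omega(GD/\sqrt T)$ and nothing more: the $\sqrt m$ you gain from Assouad is exactly cancelled by the $\sqrt m$ you lose in the diameter by spreading the $m$ groups over $m$ orthogonal coordinates --- the same kind of cancellation you warn about in your final paragraph.

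The paper's construction avoids this by keeping $\theta$ \emph{one-dimensional} and putting the statistical signal in a $\theta$-independent additive offset rather than in the $\theta$-dependent part of the loss. Take $\Theta=[0,1]$ and $\ell(\theta;Z)=Z_1\delta\theta+Z_2\delta(1-\theta)+Z_3$: groups $1,\dots,m-1$ have slope $-\delta$ and offset $Z_3\sim\Ber(\mu_i)$, while group $m$ has the opposing slope $+\delta$. The Lipschitz constant needed is only $\delta\asymp\sqrt{m/T}$ (after rescaling, $\asymp M\sqrt{m/T}/D$), which is at most $G$ throughout the entire regime in which the claimed bound stays below the trivial cap $\min\{GD,M\}$, i.e., the regime where the theorem can hold at all; your construction instead needs $GD\ge M\sqrt m$, which is stronger by a factor of $\sqrt T$. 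Because of the single opposing-slope group, raising one Bernoulli bias $\mu_{i^*}$ by $\delta$ moves the minimizer from $1/2$ to $1$, and every fixed $\theta$ is $\delta/4$-suboptimal under one of the two instances; Le Cam's two-point method with $i^*$ chosen as a least-queried group (adaptivity handled by the same chain-rule and data-processing argument you invoke), Pinsker, and $\DKL(\Ber(1/2)\mid\mid\Ber(1/2+\delta))=O(\delta^2)$ then give $\Omega(M\sqrt{m/T})$. Note that this is precisely the ``one heavy group among $m-1$ identical groups'' design you dismissed: the opposing-slope group is the ingredient that defeats the cancellation you describe, by converting a $\delta$-shift in one group's offset into an $\Theta(1)$ displacement of the minimizer.
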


As $\sqrt{x + y} \leq \sqrt{x} + \sqrt{y} \leq \sqrt{2(x+y)}$ for $x, y \geq 0$, this theorem immediately implies that the minimax convergence rate is $\Omega\left(\sqrt{\frac{G^2D^2 + M^2m}{T}} \right)$, which equals the convergence rate achieved by Algorithm~\ref{alg:GDRO-TINF} up to a constant factor.

\paragraph{Proof Outline.}
It suffices to show two lower bounds $\frac{GD}{\sqrt{T}}$ and $M\sqrt{\frac{m}{T}}$ independently.
The former is a well-known lower bound for stochastic convex optimization~\citep{Agarwal2012}.
To illustrate the latter, we take an algorithmic dependent point of view via the Le cam's method.
For any algorithm in $\mathcal A_T$, we need to construct instances $\mathcal P_0, \mathcal P_1$ such that the total variation distance between the distributions over the query outcomes (they depend on both the behavior of the algorithm and the instance) with respect to $\mathcal P_0$ and $\mathcal P_1$ is small.
On the other hand, the objective function of the two instances must be well-separated, i.e., any fixed $\theta$ is $\delta$ sub-optimal for either $\mathcal P_0$ or $\mathcal P_1$.
So, any algorithm that solves group DRO up to error $\delta$ needs to distinguish two instances $\caP_0$ and $\caP_1$.
This implies a query lower bound because the total variation distance of the outcome distributions of these instances is small.
The challenge is how to construct such instances for the regime of small dimensions of $\theta$, e.g, $n = 1$.
% If one has is given a  flexibility of at least ... on the dimension count for $\theta$, indeed it is possible to consider the following construction as... which achieves our goal.
To this end, we carefully construct linear functions for $m$ groups using opposite slopes. Then, based on the behavior of the algorithm, we tweak the noise bias in one of the groups with a positive slope, in a way that any fixed $\theta$ is $\Theta(\delta)$ sub-optimal for one of these instances.
For the detailed proof, see Appendix~\ref{app:lb}.
%%% pgfplot settings
% \pgfplotsset{cycle list/Set1}
\pgfplotsset{
    % initialize Dark2
    cycle list/Dark2,
    % combine it with 'mark list*':
    cycle multiindex* list={
        Dark2\nextlist
        dashed,solid,densely dashdotted\nextlist
    },
    table/col sep=comma,
    grid=both,
    every axis plot/.append style={ultra thick}
}
\begin{figure*}[t]
    \centering
    \pgfplotsset{xlabel={Iteration $T$},
    ylabel={Optimality gap},
    width=\linewidth,
    legend entries={Sagawa~et~al., GDRO-EXP3P, GDRO-TINF},
    legend columns=-1,
    % legend pos=south west,
    legend style={font=\small},
    legend to name=named
    }
    \begin{subfigure}[b]{.35\linewidth}
    \begin{tikzpicture}
        \begin{loglogaxis}[]
        \addplot table {figdata/adult_Sagawa_et_al.csv};
        \addplot table {figdata/adult_EXP3P.csv};
        \addplot table {figdata/adult_TINF.csv};
        \end{loglogaxis}
    \end{tikzpicture}
    \caption{Logistic loss}
    \end{subfigure}
    \hspace{3em}
    \begin{subfigure}[b]{.35\linewidth}
    \begin{tikzpicture}
        \begin{loglogaxis}[]
        \addplot table {figdata/adult_hinge_m6_Sagawa_et_al.csv};
        \addplot table {figdata/adult_hinge_m6_EXP3P.csv};
        \addplot table {figdata/adult_hinge_m6_TINF.csv};
        \end{loglogaxis}
    \end{tikzpicture}
    \caption{Hinge loss}
    \end{subfigure}
    \ref{named}
    \caption{Results on Adult dataset for convex losses. Both axes are log-scale.}\label{fig:adult}
\end{figure*}
% \begin{figure*}[t]
%     \pgfplotsset{xlabel={Iteration $T$},
%     ylabel={Objective},
%     width=\linewidth,
%     legend entries={Sagawa~et~al., GDRO-EXP3, GDRO-TINF},
%     legend style={font=\small}
%     }
%     \centering
%     \begin{subfigure}[b]{.32\linewidth}
%     \centering
%     \begin{tikzpicture}
%         \begin{loglogaxis}
%         \addplot table {figdata/synthetic_m10_Sagawa_et_al.csv};
%         \addplot table {figdata/synthetic_m10_EXP3P.csv};
%         \addplot table {figdata/synthetic_m10_TINF.csv};
%         \end{loglogaxis}
%     \end{tikzpicture}
%     \caption{$m=10$}
%     \end{subfigure}
%     \begin{subfigure}[b]{.32\linewidth}
%     \centering
%     \begin{tikzpicture}
%         \begin{loglogaxis}
%         \addplot table {figdata/synthetic_m50_Sagawa_et_al.csv};
%         \addplot table {figdata/synthetic_m50_EXP3P.csv};
%         \addplot table {figdata/synthetic_m50_TINF.csv};
%         \end{loglogaxis}
%     \end{tikzpicture}
%     \caption{$m=50$}
%     \end{subfigure}
%     \begin{subfigure}[b]{.32\linewidth}
%     \centering
%     \begin{tikzpicture}
%         \begin{loglogaxis}
%         \addplot table {figdata/synthetic_m100_Sagawa_et_al.csv};
%         \addplot table {figdata/synthetic_m100_EXP3P.csv};
%         \addplot table {figdata/synthetic_m100_TINF.csv};
%         \end{loglogaxis}
%     \end{tikzpicture}
%     \caption{$m=100$}
%     \end{subfigure}
%     \caption{Results on synthetic dataset}\label{fig:synthetic}
% \end{figure*}

\section{Experiments}\label{sec:experiments}
In this section, we compare our algorithms with baseline algorithms using real-world datasets in the group DRO setting for both convex and deep learning regimes.
The additional detail of experiments as well as an additional experiment are provided in Appendix~\ref{app:experiments}.
The experiment codes are available in Supplementary materials.

\subsection{Experiment in the convex regime}
First, we validate our convergence analysis for the convex regime. The experiment setup is adopted from \citet{Namkoong2016}.
\paragraph{Dataset.}
We use Adult dataset~\citep{Dua:2019}, which consists of age, gender, race, educational background, and many other attributes of $48,842$ individuals from the US census.
The task is to predict whether the person's income is greater than $50,000$ USD or not.
We set up 6 groups based on the race and gender attributes: each group corresponds to a combination of $\{\text{black}, \text{white}, \text{others} \} \times \{\text{female}, \text{male}\}$.
Converting the categorical features to dummy variables, we obtain a $101$-dimensional feature vector $a \in \R^n$ ($n = 101$) for each individual.
We train the linear model with the logistic loss and hinge loss functions.
% \iffalse
% For each feature-label pair $(a, b) \in \R^n \times \{\pm 1\}$, we define the logistic loss function by
% \[
%     \ell(\theta; a, b) = \log(1 + \exp(-b a^\top \theta))
% \]
% and the hinge loss function by
% \[
%     \ell(\theta; a, b) = \max\{0, 1 -b a^\top \theta\}.
% \]
% \fi
The group-DRO objective is the worst empirical loss over the 6 groups $\max_{i = 1}^6 \frac{1}{\abs{I_i}} \sum_{(a, b) \in I_i} \ell(\theta; a, b),$:
% \[
%     \max_{i = 1}^6 \frac{1}{\abs{I_i}} \sum_{(a, b) \in I_i} \ell(\theta; a, b),
% \]
where $I_i$ is the set of data points in the $i$th group.
The feasible region is the Euclidean ball of radius $D=10$.

% The feasible region is set to the Euclidean ball of radius $D=10$.

\subsubsection{Algorithms}
We implemented \textsc{GDRO-EXP3P}, \textsc{GDRO-TINF}, and the algorithm in~\citep{Sagawa2020} in Python.
We ran our algorithms for $T=$2,000,000 iterations.
% We ran our code in Anaconda~4.7.12 in a machine with Power 9 processors.\sj{I think you only need that if you report times, which you don't. can leave out that sentence and the following.}
% The memory usage during the experiment was at most 1 GB for every algorithm.
%In what follows, we describe the implementation detail.

% \paragraph{Inner online algorithms.}
% It is known that EXP3 has a variance as large as $O(T^2)$~\citep{Lattimore2020bandit}.
% Therefore, vanilla EXP3 often fails to achieve a sublinear regret even though it achieves $O(\sqrt{T})$ regret \emph{in expectation}.
% This large variance makes it difficult to reliably evaluate the performance of the algorithms.
% To stabilize the algorithms, we replaced EXP3 with its variation, EXP3P~\citep{Auer2003}, which achieves $O(\sqrt{T})$ regret \emph{with high probability}.
% Note that this change does not harm our expected convergence bounds.

\paragraph{Step sizes.}
The choice of step sizes is crucial to the practical performance of first-order methods.
We found that the decreasing step size $\eta_{\theta, t} \sim 1/\sqrt{t}$ for $\theta_t$ and the fixed step size $\eta_{q} \sim 1/\sqrt{T}$ for $q_t$ gave the best results.
More precisely, we set $\eta_{\theta, t} = \frac{C_{\theta}D}{\sqrt{t}}$ ($t \in [T]$) and $\eta_{q} = C_q\sqrt\frac{\log m}{m T}$, where $C_\theta \in [0.1, 5.0]$ and $C_q \in [0.1, 3.0]$ are hyper-parameters tuned for each algorithm.
We used the best hyper-parameter found by Optuna~\citep{Optuna2019} for the shown results.

\paragraph{Mini-batch and Initialization.}
The use of mini-batch often improves the stability of stochastic gradient algorithms.
In our experiments, we used mini-batches of size $10$ to evaluate stochastic gradients.
Neither the objective values of outputs nor the stability was improved with larger mini-batch sizes.
The group DRO objective is evaluated using the entire dataset. Further, we initialized the algorithms with $\theta_1 = \mathbf{0}$.

% \paragraph{Initialization.}
% We initialized the algorithms with $\theta_1 = \mathbf{0}$.

\subsubsection{Results}
% We show the results of our experiment in Figure~\ref{fig:adult}.
% \paragraph{Adult dataset.}
In Figure~\ref{fig:adult}, we plot the optimality gap of the averaged iterate $\frac{1}{T}\sum_{t=1}^T \theta_t$ against the number of iteration $T$.
We observe that all the algorithms converge with a rate roughly $T^{-0.5}$ for both loss functions, consistent with our convergence bound.
Furthermore, our algorithms (\textsc{GDRO-EXP3P} and \textsc{GDRO-TINF}) achieve faster convergence compared to the algorithm by \citet{Sagawa2020}.
Interestingly, \textsc{GDRO-TINF} achieves a $10^{-4}$ optimality gap in $T = 10^6$ iterations, which is faster than the theoretical $T^{-0.5}$ rate in Theorem~\ref{thm:GDRO-TINF}.
\par We perform additional experiments in the deep learning regime across five benchmark datasets from WILDS ~\citep{Sagawa2020} including Waterbirds, FMOW, MultiNLI, etc. Worst group and average test performance of various methods is reported in Appendix \ref{sec:dlexp}

% \begin{conclusion}
\section{Conclusion}
In this work we settle the optimal achievable regret in the group DRO problem, up to a log factor, by (1) developing a new technique that enables us to employ online optimization techniques in offline robust optimization, and (2) combining the right ingredients from online adversarial algorithms to achieve the almost best rate for group DRO. We hope that our work further encourages researchers in the future to employ such reductions from online to offline optimization. Besides the demonstrated theoretical guarantees, our extensive experiments on real and synthetic data illustrate that our algorithm is competitive with state-of-the-art methods. 
\clearpage
\bibliographystyle{plainnat}
\bibliography{main}
\clearpage
\appendix
\section{Preliminaries of online convex optimization and no-regret dynamics}\label{app:preliminaries}
In this section, we briefly introduce necessary results from online convex optimization (OCO). For the further details of OCO, refer to \cite{Hazan2016book,Orabona2019book}.

\subsection{Regret Bounds of OCO algorithms}
Let $X \subseteq \R^d$ be a compact convex set and $\Psi: X \to \R$ be a strictly convex function such that $\norm{\partial\Psi(x)} \to +\infty$ as $x \to \partial X$.
Online mirror descent (OMD) is the following online learning algorithm.
For $t = 1, \dots, T$:
\begin{enumerate}
    \item Let $\tilde x_{t+1} \in \R^n$ be the solution of $\nabla\Psi(\tilde x_{t+1}) = \nabla\Psi(x_t) - \eta_t \nabla_t$, where $\eta_t > 0$ is a step size and $\nabla_t = \nabla f_t(x_t)$ is the gradient feedback of round $t$.
    \item Let $x_{t+1} \in \argmin_{x \in X} D_\Psi(x, \tilde x_{t+1})$, where $D_\Psi(x, y) = \Psi(x) - \Psi(y) - \nabla\Psi(y)^\top(x - y)$ is the Bregman divergence with respect to $\Psi$.
\end{enumerate}

We use the following regret bound.
\begin{lemma}[Regret Bound of OMD; see, e.g., {\citet[Theorem~6.8]{Orabona2019book}}]\label{lem:OMD-regret}
    OMD satisfies that for any $x^* \in X$,
    \begin{align}
        \sum_{t=1}^T f_t(x_t) - \sum_{t=1}^T f_t(x^*)
        \leq \frac{1}{2}\sum_{t=1}^T \eta_t \norm{\nabla_t}_{t,*}^{2} + \frac{D_\Psi(x^*, x_1)}{\eta_1} + \sum_{t=2}^T \left( \frac{1}{\eta_{t}} - \frac{1}{\eta_{t-1}} \right)D(x^*, x_t),
    \end{align}
    where $\norm{x}_t$ denotes the local norm, i.e., $\norm{x}_t := \sqrt{x^\top \nabla^2 \Psi(z_t) x}$ for some $z_t \in [x_t, \tilde x_{t+1}]$ and $\norm{x}_{t,*} := \sqrt{x^\top \nabla^2 \Psi(z_t)^{-1} x}$ is its dual norm.
\end{lemma}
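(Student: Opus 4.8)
The plan is to follow the standard template for analyzing online mirror descent with a time-varying step size, in three movements: (i) linearize the regret by convexity; (ii) bound each per-round inner product $\nabla_t^\top(x_t - x^*)$ using the three-point identity for Bregman divergences together with the Pythagorean property of the Bregman projection; and (iii) control the resulting ``one-step'' Bregman term by the local dual norm and telescope the leftover divergence terms across the varying step sizes. First, since each $f_t$ is convex, $f_t(x_t) - f_t(x^*) \leq \nabla_t^\top(x_t - x^*)$, so it suffices to upper bound $\sum_{t=1}^T \nabla_t^\top(x_t - x^*)$.

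For the per-round bound, I would use the three-point identity
\[
    (\nabla\Psi(b) - \nabla\Psi(a))^\top(c - a) = D_\Psi(c, a) - D_\Psi(c, b) + D_\Psi(a, b),
\]
which is immediate from the definition of $D_\Psi$. Applying it with $a = x_t$, $b = \tilde x_{t+1}$, $c = x^*$ and substituting the mirror-update identity $\nabla\Psi(\tilde x_{t+1}) - \nabla\Psi(x_t) = -\eta_t\nabla_t$ yields
\[
    \eta_t\, \nabla_t^\top(x_t - x^*) = D_\Psi(x^*, x_t) - D_\Psi(x^*, \tilde x_{t+1}) + D_\Psi(x_t, \tilde x_{t+1}).
\]
Next I would invoke the generalized Pythagorean inequality for the Bregman projection $x_{t+1} = \argmin_{x \in X} D_\Psi(x, \tilde x_{t+1})$, namely $D_\Psi(x^*, \tilde x_{t+1}) \geq D_\Psi(x^*, x_{t+1})$, which follows from the first-order optimality condition of the projection combined with the same three-point identity. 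Replacing $\tilde x_{t+1}$ by $x_{t+1}$ and dividing by $\eta_t$ gives
\[
    \nabla_t^\top(x_t - x^*) \leq \frac{D_\Psi(x^*, x_t) - D_\Psi(x^*, x_{t+1})}{\eta_t} + \frac{D_\Psi(x_t, \tilde x_{t+1})}{\eta_t}.
\]

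The crux is the one-step term $\frac{1}{\eta_t}D_\Psi(x_t, \tilde x_{t+1})$, which I would show is at most $\frac{\eta_t}{2}\norm{\nabla_t}_{t,*}^2$. A second-order Taylor expansion of $\Psi$ gives $D_\Psi(x_t, \tilde x_{t+1}) = \frac{1}{2}\norm{x_t - \tilde x_{t+1}}_t^2$ with $z_t \in [x_t, \tilde x_{t+1}]$, while the mean-value form of the mirror map gives $\nabla\Psi(x_t) - \nabla\Psi(\tilde x_{t+1}) = \nabla^2\Psi(z_t)(x_t - \tilde x_{t+1}) = \eta_t\nabla_t$; eliminating $x_t - \tilde x_{t+1} = \eta_t(\nabla^2\Psi(z_t))^{-1}\nabla_t$ turns the quadratic into $\frac{\eta_t^2}{2}\nabla_t^\top(\nabla^2\Psi(z_t))^{-1}\nabla_t = \frac{\eta_t^2}{2}\norm{\nabla_t}_{t,*}^2$. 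I expect this to be the main obstacle: forcing the two intermediate points to coincide at a single $z_t$ is precisely why the statement quantifies $z_t$ existentially on the segment $[x_t, \tilde x_{t+1}]$, and the clean version I would cite (or reproduce) from \citet[Theorem~6.8]{Orabona2019book} handles this by choosing $z_t$ consistently for both expansions.

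Finally, summing over $t$ and using the linearization, the gradient contributions give $\frac{1}{2}\sum_{t=1}^T \eta_t\norm{\nabla_t}_{t,*}^2$, and I would resum the telescoping divergence contributions by Abel summation: writing $D_t := D_\Psi(x^*, x_t)$,
\[
    \sum_{t=1}^T \frac{D_t - D_{t+1}}{\eta_t} \leq \frac{D_1}{\eta_1} + \sum_{t=2}^T \left(\frac{1}{\eta_t} - \frac{1}{\eta_{t-1}}\right) D_t,
\]
after discarding the nonpositive endpoint term $-D_{T+1}/\eta_T$. Substituting this back, together with $D_1 = D_\Psi(x^*, x_1)$, produces exactly the claimed regret bound. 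Note the argument never uses monotonicity of the step sizes, so the bound holds for arbitrary positive $\eta_t$; the nonincreasing assumption is only exploited later, when the coefficients $\frac{1}{\eta_t} - \frac{1}{\eta_{t-1}}$ are made nonnegative to turn the last sum into a diameter term.
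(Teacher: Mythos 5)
Your overall architecture is exactly the standard proof behind the citation (note the paper itself gives no proof of this lemma; it only points to \citet[Theorem~6.8]{Orabona2019book}): linearization by convexity, the three-point identity combined with the mirror-update identity $\nabla\Psi(\tilde x_{t+1})-\nabla\Psi(x_t)=-\eta_t\nabla_t$, the Bregman--Pythagoras inequality for the projection step, and Abel summation of the telescoping divergences. All of those steps are correct, as is your closing remark that monotonicity of $\eta_t$ is not needed for the inequality as stated.

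The gap is in your crux step, and it is a real one: the ``mean-value form of the mirror map,'' $\nabla\Psi(x_t)-\nabla\Psi(\tilde x_{t+1})=\nabla^2\Psi(z_t)(x_t-\tilde x_{t+1})$ for a \emph{single} $z_t$, is not a theorem. The mean value theorem fails for vector-valued maps in dimension $\geq 2$; it holds only in integral form, with $\int_0^1 \nabla^2\Psi(x_t+s(\tilde x_{t+1}-x_t))\,ds$ in place of $\nabla^2\Psi(z_t)$, or componentwise with \emph{different} intermediate points, and there is no general way to ``choose $z_t$ consistently'' for this identity and for your Taylor expansion of the Bregman term --- that is not how the cited proof resolves the issue. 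The standard fix avoids the vector MVT entirely and uses only \emph{one} scalar Taylor expansion. Start from the symmetrization identity
\[
    D_\Psi(x_t,\tilde x_{t+1}) + D_\Psi(\tilde x_{t+1},x_t)
    = \bigl(\nabla\Psi(x_t)-\nabla\Psi(\tilde x_{t+1})\bigr)^\top(x_t-\tilde x_{t+1})
    = \eta_t\,\nabla_t^\top(x_t-\tilde x_{t+1}),
\]
apply Taylor's theorem to the \emph{scalar} function $s\mapsto\Psi\bigl(x_t+s(\tilde x_{t+1}-x_t)\bigr)$ to get $D_\Psi(\tilde x_{t+1},x_t)=\frac{1}{2}\norm{x_t-\tilde x_{t+1}}_t^2$ for a single $z_t\in[x_t,\tilde x_{t+1}]$ (this is exactly the existentially quantified $z_t$ of the statement), and then apply Fenchel--Young with precisely the matrix $\nabla^2\Psi(z_t)$:
\[
    \eta_t\,\nabla_t^\top(x_t-\tilde x_{t+1})
    \leq \frac{\eta_t^2}{2}\norm{\nabla_t}_{t,*}^2 + \frac{1}{2}\norm{x_t-\tilde x_{t+1}}_t^2.
\]
The quadratic terms cancel exactly, yielding $D_\Psi(x_t,\tilde x_{t+1})\leq\frac{\eta_t^2}{2}\norm{\nabla_t}_{t,*}^2$ with the advertised constant. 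The cancellation matters: the naive patch of dropping $D_\Psi(\tilde x_{t+1},x_t)\geq 0$, bounding $D_\Psi(x_t,\tilde x_{t+1})\leq \eta_t\nabla_t^\top(x_t-\tilde x_{t+1})$, and solving the resulting quadratic $\frac{1}{2}r^2\leq \eta_t\norm{\nabla_t}_{t,*}\,r$ gives $2\eta_t^2\norm{\nabla_t}_{t,*}^2$, a factor of $4$ worse than claimed. With this one replacement your proof is complete and coincides with the cited argument.
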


In this paper, we use regret bounds for the following specific choices of $\Psi$.

\paragraph{Online Gradient Descent} OMD for $\Psi(x) = \frac{1}{2}\norm{x}_2^2$ on a generic compact convex set $X$ is simply online gradient descent (OGD)~\cite{Zinkevich2003}:
\begin{align*}
    x_{t+1} = \proj_{X}(x_t - \eta_t \nabla_t).
\end{align*}
Note that $D(x, y) = \frac{1}{2}\norm{x - y}_2^2$ and the minimizing the Bregman divergence is given by orthogonal projection.
\begin{lemma}[Regret Bound of OGD]\label{lem:OGD-regret}
    OMD satisfies that for any $x^* \in X$,
    \begin{align}
        \sum_{t=1}^T f_t(x_t) - \sum_{t=1}^T f_t(x^*)
        \leq \frac{1}{2}\sum_{t=1}^T \eta_t \norm{\nabla_t}^2 + \frac{\norm{x^* - x_1}_2^2}{2\eta_1} + \frac{1}{2}\sum_{t=2}^T \left( \frac{1}{\eta_{t}} - \frac{1}{\eta_{t-1}} \right) \norm{x^* -  x_t}_2^2.
    \end{align}
    If we use decreasing step sizes and $\max_{t=1}^T \norm{x^* - x_t} \leq D$, we have
    \begin{align}
        \sum_{t=1}^T f_t(x_t) - \sum_{t=1}^T f_t(x^*)
        \leq \frac{1}{2}\sum_{t=1}^T \eta_t \norm{\nabla_t}^2 + \frac{D^2}{2\eta_T}.
    \end{align}
\end{lemma}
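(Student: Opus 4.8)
The plan is to give a direct potential-function argument, viewing OGD as the Euclidean specialization of OMD (the case $\Psi(x)=\frac12\norm{x}_2^2$, for which $D_\Psi(x,y)=\frac12\norm{x-y}_2^2$). One could alternatively substitute $\nabla^2\Psi=I$ into Lemma~\ref{lem:OMD-regret}, so that the local norm and its dual both reduce to $\norm{\cdot}_2$, and read off the claim immediately; but the self-contained route below avoids invoking the local-norm machinery and makes the boundary bookkeeping transparent. First I would fix $x^*\in X$ and use convexity of each $f_t$ to linearize the regret, $f_t(x_t)-f_t(x^*)\le \nabla_t^\top(x_t-x^*)$, reducing the claim to bounding $\sum_{t=1}^T \nabla_t^\top(x_t-x^*)$.

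The key single-step inequality comes from the update $x_{t+1}=\proj_X(x_t-\eta_t\nabla_t)$ together with nonexpansiveness of the Euclidean projection onto the convex set $X$ (using $x^*\in X$): expanding $\norm{x_{t+1}-x^*}_2^2 \le \norm{x_t-\eta_t\nabla_t-x^*}_2^2$ and rearranging yields $\nabla_t^\top(x_t-x^*) \le \frac{\norm{x_t-x^*}_2^2-\norm{x_{t+1}-x^*}_2^2}{2\eta_t}+\frac{\eta_t}{2}\norm{\nabla_t}_2^2$. Summing over $t=1,\dots,T$ produces the $\frac12\sum_t\eta_t\norm{\nabla_t}_2^2$ term directly and leaves a weighted telescoping sum in the squared distances $d_t:=\norm{x_t-x^*}_2^2$.

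The remaining step is to handle $\sum_{t=1}^T \frac{d_t-d_{t+1}}{2\eta_t}$ by summation by parts (Abel summation). Reindexing the shifted sum and collecting terms gives $\frac{d_1}{2\eta_1}+\frac12\sum_{t=2}^T d_t\left(\frac{1}{\eta_t}-\frac{1}{\eta_{t-1}}\right)-\frac{d_{T+1}}{2\eta_T}$; discarding the nonnegative final term $\frac{d_{T+1}}{2\eta_T}$ yields exactly the first displayed bound. For the second bound I would use that the step sizes are nonincreasing, so every coefficient $\frac1{\eta_t}-\frac1{\eta_{t-1}}\ge 0$, and that $d_t\le D^2$ for all $t$ by assumption; replacing each $d_t$ by $D^2$ and telescoping the coefficients collapses $\frac{D^2}{2\eta_1}+\frac{D^2}{2}\sum_{t=2}^T\left(\frac1{\eta_t}-\frac1{\eta_{t-1}}\right)$ to $\frac{D^2}{2\eta_T}$.

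There is no deep obstacle, as the result is classical; the two points needing care are the nonexpansiveness estimate, which crucially relies on $X$ being convex and $x^*\in X$ so that projecting $x_t-\eta_t\nabla_t$ back onto $X$ can only decrease the distance to $x^*$, and the bookkeeping in the summation by parts, where one must track the boundary terms and verify that only the nonnegative $d_{T+1}/(2\eta_T)$ term is dropped. I would also emphasize that the hypothesis $\max_{t}\norm{x^*-x_t}\le D$ must cover $t=1$ as well, since the $\frac{1}{2\eta_1}$ term uses $d_1\le D^2$ for the telescoping collapse to be valid.
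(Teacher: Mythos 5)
Your proof is correct, but it takes a different route from the paper. The paper never proves this lemma directly: it treats OGD as OMD with $\Psi(x)=\frac{1}{2}\norm{x}_2^2$ (noting $D_\Psi(x,y)=\frac{1}{2}\norm{x-y}_2^2$ and that the Bregman projection becomes the orthogonal projection) and reads the bound off the general OMD regret bound of Lemma~\ref{lem:OMD-regret}, where the local norm $\norm{\cdot}_{t,*}$ collapses to $\norm{\cdot}_2$ since $\nabla^2\Psi=I$. You mention this specialization as an aside but instead give the classical self-contained Zinkevich-style argument: linearize via convexity, apply nonexpansiveness of $\proj_X$ (valid since $X$ is convex and $x^*\in X$) to get the one-step inequality $\nabla_t^\top(x_t-x^*)\le\frac{d_t-d_{t+1}}{2\eta_t}+\frac{\eta_t}{2}\norm{\nabla_t}_2^2$ with $d_t=\norm{x_t-x^*}_2^2$, then handle the time-varying step sizes by Abel summation, dropping only the nonnegative boundary term $d_{T+1}/(2\eta_T)$. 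Both the summation-by-parts bookkeeping and the collapse of $\frac{D^2}{2\eta_1}+\frac{D^2}{2}\sum_{t=2}^T\bigl(\frac{1}{\eta_t}-\frac{1}{\eta_{t-1}}\bigr)$ to $\frac{D^2}{2\eta_T}$ check out, and your observation that the diameter hypothesis must cover $t=1$ is apt. What each approach buys: the paper's specialization keeps the appendix uniform, since the same Lemma~\ref{lem:OMD-regret} also yields the Hedge and Tsallis-INF bounds used in Theorems~\ref{thm:GDRO-EXP3} and~\ref{thm:GDRO-TINF}; your direct argument is more elementary, avoids the local-norm machinery (and the choice of intermediate point $z_t\in[x_t,\tilde x_{t+1}]$ it entails), and makes explicit exactly which boundary terms are discarded and where convexity of $X$ and membership $x^*\in X$ are used.
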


\paragraph{Hedge}
OMD for $\Psi(x) = \sum_i (x_i \log x_i - x_i)$ on the probability simplex is the Hedge algorithm.
\begin{align*}
    \tilde x_{t+1} = x_t \exp(-\eta_t \nabla_t), \quad
    x_{t+1} = \frac{\tilde x_{t+1}}{\norm{\tilde x_{t+1}}_1}.
\end{align*}
Note that $\nabla^2 \Psi(x) = \diag(1/x_i)$.
If $\nabla_t \geq 0$, then $\tilde x_{t+1} \leq x_t$ and $\norm{\nabla_t}_{t,*} \leq \norm{\nabla_t}_{\nabla^2\Psi(x_t)^{-1}}$.
For $x_1 = \bfone/d$, $D(x^*, x_1) \leq \log d$ for any $x^*$.
\begin{lemma}[Regret Bound of Hedge]\label{lem:Hedge-regret}
    For $\nabla_t \geq 0$ ($t=1, \dots, 1$), Hedge with fixed step size $\eta > 0$ satisfies
    \begin{align}
        \sum_{t=1}^T f_t(x_t) - \sum_{t=1}^T f_t(x^*)
        \leq \frac{\eta}{2}\sum_{t=1}^T \norm{\nabla_t}_{\nabla^2\Psi(x_t)^{-1}}^{2} + \frac{\log d}{\eta}.
    \end{align}
\end{lemma}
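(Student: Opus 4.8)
The plan is to derive the Hedge regret bound as a direct specialization of the general OMD regret bound in Lemma~\ref{lem:OMD-regret} to the negative-entropy regularizer $\Psi(x) = \sum_i (x_i \log x_i - x_i)$ with a constant step size. First I would instantiate Lemma~\ref{lem:OMD-regret} with $\eta_t = \eta$ for all $t$. The key simplification is that the telescoping correction term $\sum_{t=2}^T (1/\eta_t - 1/\eta_{t-1}) D(x^*, x_t)$ vanishes identically, since each coefficient $1/\eta_t - 1/\eta_{t-1}$ is zero under a fixed step size. This leaves only the gradient term $\frac{\eta}{2}\sum_{t=1}^T \norm{\nabla_t}_{t,*}^2$ and the initialization term $D_\Psi(x^*, x_1)/\eta$, already matching the structure of the target bound.

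Next I would control the initialization term using the explicit form of $\Psi$. Writing $\nabla\Psi(x)_i = \log x_i$ and using that both $x^*$ and $x_1 = \bfone/d$ lie on the simplex, the linear part of the Bregman divergence drops out (its coefficient is the constant $-\log d$ dotted against $x^* - x_1$, whose coordinates sum to zero). A short computation then gives $D_\Psi(x^*, \bfone/d) = \log d + \sum_i x_i^* \log x_i^*$; since $\sum_i x_i^* \log x_i^*$ is the negated Shannon entropy and hence nonpositive on the simplex, this is at most $\log d$, yielding the second term $\log d / \eta$.

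The one step requiring care is replacing the local dual norm $\norm{\nabla_t}_{t,*}$, evaluated at the unknown intermediate point $z_t \in [x_t, \tilde x_{t+1}]$, by the norm at $x_t$ itself. Here I would invoke the nonnegativity hypothesis $\nabla_t \geq 0$: the Hedge update $\tilde x_{t+1} = x_t \exp(-\eta \nabla_t) \leq x_t$ holds entrywise, so every point on the segment satisfies $z_{t,i} = (1-s)x_{t,i} + s\,\tilde x_{t+1,i} \leq x_{t,i}$ for $s \in [0,1]$. Because $\nabla^2\Psi(x) = \diag(1/x_i)$, its inverse is $\diag(x_i)$, whence $\norm{\nabla_t}_{t,*}^2 = \sum_i z_{t,i}\nabla_{t,i}^2 \leq \sum_i x_{t,i}\nabla_{t,i}^2 = \norm{\nabla_t}_{\nabla^2\Psi(x_t)^{-1}}^2$, which is exactly the quantity appearing in the statement. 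Combining this with the initialization estimate produces the claimed inequality. I expect this monotonicity argument along the segment — hinging on the sign condition to eliminate the dependence on the unknown $z_t$ — to be the only nonroutine part; everything else is bookkeeping.
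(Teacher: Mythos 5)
Your proof is correct and takes essentially the same route as the paper, which likewise obtains the lemma by specializing Lemma~\ref{lem:OMD-regret} to $\Psi(x) = \sum_i (x_i \log x_i - x_i)$ with a fixed step size, bounding $D_\Psi(x^*, \bfone/d) \leq \log d$, and using $\nabla_t \geq 0$ to get $\tilde x_{t+1} \leq x_t$ entrywise so that the local dual norm at any $z_t \in [x_t, \tilde x_{t+1}]$ is dominated by $\norm{\nabla_t}_{\nabla^2\Psi(x_t)^{-1}}$. The monotonicity step you flag as the only nonroutine part is exactly the observation the paper records in the remarks immediately preceding the lemma.
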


\paragraph{Tsallis-INF}
OMD for $\Psi(x) = 2(1 - \sum_i \sqrt{x_i})$ on the probability simplex is the Tsallis-INF algorithm:
\begin{align*}
    \tilde x_{t+1} \gets x_t \left(\bfone - \eta_t\nabla_t \right)^{-2}, \quad
    x_{t+1} = \left(\frac{1}{\sqrt{\tilde x_{t+1}}} - \alpha \bfone \right)^{-2},
\end{align*}
where $\alpha$ is the scaling factor such that $x_{t+1}$ is in the probability simplex.
Note that if $\nabla_t \geq 0$, then $\tilde x_{t+1} \leq x_t$ and $\norm{\nabla_t}_{t,*} \leq \norm{\nabla_t}_{\nabla^2\Psi(x_t)^{-1}}$ as in Hedge.
For $x_1 = \bfone/d$, $D(x^*, x_1) \leq \sqrt d$ for any $x^*$.
\begin{lemma}[Regret Bound of Tsallis-INF]\label{lem:TINF-regret}
    For $\nabla_t \geq 0$ ($t=1, \dots, 1$), Tsallis-INF with fixed step size $\eta > 0$ satisfies
    \begin{align}
        \sum_{t=1}^T f_t(x_t) - \sum_{t=1}^T f_t(x^*)
        \leq \frac{\eta}{2}\sum_{t=1}^T \norm{\nabla_t}_{\nabla^2\Psi(x_t)^{-1}}^{2} + \frac{\sqrt d}{\eta}.
    \end{align}
\end{lemma}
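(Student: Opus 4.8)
The plan is to obtain this as a direct specialization of the general OMD regret bound in Lemma~\ref{lem:OMD-regret}, exactly paralleling the Hedge proof of Lemma~\ref{lem:Hedge-regret} but with the Tsallis regularizer $\Psi(x) = 2(1 - \sum_i \sqrt{x_i})$. First I would instantiate Lemma~\ref{lem:OMD-regret} with this $\Psi$ and with the constant step size $\eta_t \equiv \eta$. The constant step size immediately kills the stability term $\sum_{t=2}^T (\tfrac{1}{\eta_t} - \tfrac{1}{\eta_{t-1}}) D(x^*, x_t)$, since each coefficient $\tfrac{1}{\eta_t} - \tfrac{1}{\eta_{t-1}}$ vanishes. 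This leaves the two terms $\tfrac{\eta}{2}\sum_t \norm{\nabla_t}_{t,*}^2$ and $D_\Psi(x^*, x_1)/\eta$, which I must control.

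The two remaining ingredients are a local-norm bound and a Bregman-divergence bound. For the Hessian I would compute $\nabla^2\Psi(x) = \diag(\tfrac12 x_i^{-3/2})$, so its inverse is $\nabla^2\Psi(x)^{-1} = \diag(2 x_i^{3/2})$, whose diagonal entries are coordinatewise increasing in $x$. The OMD update gives $\tilde x_{t+1,i} = x_{t,i}/(1 + \eta(\nabla_t)_i\, x_{t,i}^{1/2})^2$, so whenever $\nabla_t \geq 0$ we have $\tilde x_{t+1} \leq x_t$ entrywise; hence any intermediate point $z_t \in [x_t, \tilde x_{t+1}]$ defining the local norm also satisfies $z_t \leq x_t$. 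Combining this with the monotonicity of $\nabla^2\Psi^{-1}$ yields $\norm{\nabla_t}_{t,*}^2 = \nabla_t^\top \nabla^2\Psi(z_t)^{-1}\nabla_t \leq \nabla_t^\top \nabla^2\Psi(x_t)^{-1}\nabla_t = \norm{\nabla_t}_{\nabla^2\Psi(x_t)^{-1}}^2$, which is precisely the replacement of the (intractable) local norm by the evaluable norm at $x_t$ recorded in the remark preceding the lemma.

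For the initialization term, a direct computation at $x_1 = \bfone/d$ together with Cauchy--Schwarz gives the recorded bound $D_\Psi(x^*, x_1) \leq \sqrt{d}$ for every $x^*$ in the simplex; I would cite the closed-form evaluation of $D_\Psi$ and the inequality $\sum_i \sqrt{x_i^*} \leq \sqrt{d\sum_i x_i^*} = \sqrt d$. Substituting both bounds into the simplified OMD inequality produces $\sum_t f_t(x_t) - \sum_t f_t(x^*) \leq \tfrac{\eta}{2}\sum_t \norm{\nabla_t}_{\nabla^2\Psi(x_t)^{-1}}^2 + \sqrt{d}/\eta$, as claimed. I expect the main obstacle to be the local-norm step: one must justify picking the intermediate Hessian point $z_t$ on the segment, verify the entrywise inequality $\tilde x_{t+1} \leq x_t$ carefully from the mirror-map update, and exploit the coordinatewise monotonicity of the diagonal inverse Hessian; the constant-step-size simplification and the Bregman bound are then routine.
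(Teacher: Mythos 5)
Your route is the paper's own (implicit) proof: Lemma~\ref{lem:TINF-regret} is meant to follow from Lemma~\ref{lem:OMD-regret} with $\Psi(x)=2(1-\sum_i\sqrt{x_i})$ and a constant step size (so the term $\sum_{t\ge 2}(\eta_t^{-1}-\eta_{t-1}^{-1})D(x^*,x_t)$ vanishes), the local-norm bound $\norm{\nabla_t}_{t,*}\le\norm{\nabla_t}_{\nabla^2\Psi(x_t)^{-1}}$ obtained from the entrywise inequality $\tilde x_{t+1}\le x_t$ (valid for $\nabla_t\ge 0$) together with the coordinatewise monotonicity of $\nabla^2\Psi(x)^{-1}=\diag(2x_i^{3/2})$, and the bound on $D_\Psi(x^*,\bfone/d)$. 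Your Hessian computation, update formula, and local-norm argument are all correct (your $\frac12\diag(x_i^{-3/2})$ in fact silently corrects the paper's typo $\frac{1}{2}\diag(q_t^{-2/3})$ in the proof of Theorem~\ref{thm:GDRO-TINF}).

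The gap is in the Bregman-divergence step: your Cauchy--Schwarz inequality runs in the wrong direction. Since $\nabla\Psi(\bfone/d)=-\sqrt d\,\bfone$ and both $x^*$ and $\bfone/d$ lie in the simplex, the linear term of the divergence vanishes exactly, giving the identity
\begin{align*}
  D_\Psi(x^*,\bfone/d)
  \;=\; \Psi(x^*)-\Psi(\bfone/d)
  \;=\; 2\Bigl(\sqrt d-\sum_i\sqrt{x_i^*}\Bigr).
\end{align*}
An upper bound on this quantity therefore needs a \emph{lower} bound on $\sum_i\sqrt{x_i^*}$; the right one is $\sum_i\sqrt{x_i^*}\ge\sum_i x_i^*=1$, which gives $D_\Psi(x^*,\bfone/d)\le 2(\sqrt d-1)$. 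Your inequality $\sum_i\sqrt{x_i^*}\le\sqrt{d\sum_i x_i^*}=\sqrt d$, plugged into the identity above, yields only $D_\Psi\ge 0$, not the claimed upper bound. Moreover, the target bound itself is false under the paper's normalization of $\Psi$: taking $x^*$ to be a vertex of the simplex gives $D_\Psi=2(\sqrt d-1)>\sqrt d$ whenever $d>4$. So what you can actually prove is the lemma with $2\sqrt d/\eta$ (or $2(\sqrt d-1)/\eta$) in place of $\sqrt d/\eta$ --- a constant-factor slip that originates in the paper's own remark ``$D(x^*,x_1)\le\sqrt d$'' preceding the lemma, and which propagates only into the constants of Theorems~\ref{thm:GDRO-TINF} and~\ref{thm:permuta}, not into the asymptotic $O(\sqrt{(G^2D^2+M^2m)/T})$ rate.
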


\subsection{Convergence of No-Regret Dynamics}
Let us write DRO~\eqref{eq:general-DRO} as
\begin{align*}
    \min_{\theta \in \Theta}\max_{q \in Q} L(\theta, q).
\end{align*}
Note that $L(\theta, q)$ is convex in $\theta$ and linear in $q$.

Let us assume that we apply stochastic no-regret dynamics to this minimax problem.
The $\theta$-player and $q$-player run online algorithms on $\Theta$ and $Q$, respectively.
The feedback to $\theta$-player and $q$-player are $\hat\nabla_{\theta, t}$ and $\hat\nabla_{q, t}$, respectively, which are unbiased gradient estimators of $L$.
We can analyze the optimality gap of stochastic no-regret dynamics using the regrets.
Let $\theta^*$ be an optimal solution and 
\[
    \eps_T := \max_{q \in Q} L(\bar\theta_{1:T}, q) - \max_{q \in Q} L(\theta^*, q).
\]
be the optimality gap of the averaged iterate $\bar\theta_{1:T} = \frac{1}{T} \sum_{t=1}^T \theta_t$.
Let
\begin{align*}
    R_\theta(T; \theta^*) &= \sum_{t=1}^T L(\theta_t, q_t) -  \sum_{t=1}^T L(\theta^*, q_t)\\
    R_q(T) &= \max_{q \in \Delta_m} \sum_{t=1}^T L(\theta_t, q) - \sum_{t=1}^T L(\theta_t, q_t)
\end{align*}
be regrets of the $\theta$-player and $q$-player, respectively.
Then, by the definition of regret and Jensen's inequality, we have
\begin{align*}
    \eps_T
    &\leq \max_{q \in Q}\frac{1}{T}\sum_{t=1}^T  L(\theta_t, q) - \max_{q \in Q} L(\theta^*, q) \\
    &= \frac{R_q(T)}{T} + \frac{1}{T}\sum_{t=1}^T  L(\theta_t, q_t) - \max_{q \in Q} L(\theta^*, q) \\
    &\leq \frac{R_q(T)}{T} + \frac{1}{T}\sum_{t=1}^T  L(\theta_t, q_t) - \frac{1}{T}\sum_{t=1}^T L(\theta^*, q_t) \\
    &= \frac{R_q(T) + R_\theta(T; \theta^*)}{T}.
\end{align*}
Therefore,
\begin{align}\label{eq:regret-to-convergence}
    \E[\eps_T] \leq \frac{\E[R_q(T) + R_\theta(T; \theta^*)]}{T},
\end{align}
where the expectation is taken over the randomness of gradient estimators and the algorithm.
This proves \eqref{eq:converge-regret}.

Similarly, let 
\[
    R_q(T; q^*) =  \sum_{t=1}^T L(\theta_t, q^*) - \sum_{t=1}^T L(\theta_t, q_t)
\]
be the regret of the $q$-player with respect to $q^* \in Q$.
Then, for any fixed saddle point $(\theta^*, q^*)$, 
\begin{align*}
    \eps_T(q^*)
    &\leq \frac{1}{T}\sum_{t=1}^T  L(\theta_t, q^*) - L(\theta^*, q^*) \\
    &= \frac{R_q(T;q^*)}{T} + \frac{1}{T}\sum_{t=1}^T  L(\theta_t, q_t) - L(\theta^*, q^*) \\
    &\leq \frac{R_q(T;q^*)}{T} + \frac{1}{T}\sum_{t=1}^T  L(\theta_t, q_t) - \frac{1}{T}\sum_{t=1}^T L(\theta^*, q_t) \tag{since $q^* \in \argmax_{q\in Q} L(\theta^*, q)$} \\
    &= \frac{R_q(T;q^*) + R_\theta(T; \theta^*)}{T}.
\end{align*}
Therefore,
\begin{align}\label{eq:regret-to-saddle}
    \E[\eps_T(q^*)] \leq \frac{\E[R_q(T; q^*) + R_\theta(T; \theta^*)]}{T}.
\end{align}

\subsection{EXP3P}
To analyze the convergence rate using the above bound, we need to bound the expected regret of the $q$-player for an \emph{adaptive} adversary.
This is not possible by OMD bounds because it only considers fixed optimal solutions, i.e., an \emph{oblivious} adversary.
Thankfully, in group DRO, we can use the EXP3P algorithm~\cite{Auer2003}, which has desired regret bounds for adaptive adversaries.

\begin{algorithm}
\caption{\textsc{EXP3P}}\label{alg:EXP3P}
    \begin{algorithmic}[1]
        \REQUIRE parameters $\beta, \eta, \gamma > 0$
        \STATE Let $q_1 = (1/m, \dots, 1/m)$.
        \FOR{$t = 1, \dots, T$}
            \STATE Sample $i_t \sim q_t$.
            \STATE Let $\tilde g_t := \frac{\ell_{i_t}\bfe_{i_t} + \beta\bfone}{q_{i_t}}$, $G_{t} := \sum_{\tau=1}^t \tilde g_t$, and $Z = \sum_{i \in [m]} \exp(\eta G_{t,i})$.
            \STATE $q_{t+1} \gets (1-\gamma)\frac{\exp(\eta G_t)}{Z} + \frac{\gamma\bfone}{m}$.
        \ENDFOR
    \end{algorithmic}
\end{algorithm}

\begin{theorem}[{see, e.g., \cite[Theorem~3.4]{Bubeck2012}}]\label{thm:EXP3P}
    Let $g_t \in [0,1]^m$ for $t \in T$.
    For $\beta = \sqrt{\frac{\log m}{mT}}$, $\eta = O(\sqrt{\frac{\log m}{mT}})$, and $\gamma = O(\sqrt{\frac{m\log m}{T}})$, EXP3P achieves
    \[
        \E[R_q(T)] = \E\left[ \max_{i^* \in [m]} \sum_{t=1}^T (g_{t,i^*} - g_{t,i_t}) \right] \lesssim \sqrt{mT \log m}. 
    \]
\end{theorem}

\section{Ommited Proofs}\label{app:proofs}
\subsection{Proof of Theorem~\ref{thm:general}}
Let $I_t$ and $z_t$ be the chosen group and the sample at iteration $t$, respectively.
Observe that Algorithm~\ref{alg:general} is stochastic no-regret dynamics with OGD, OMD, and gradient estimators
\begin{align*}
    \hat\nabla_{\theta, t} := \nabla_\theta \ell(\theta_t; z_t), \quad
    \hat\nabla_{q, t} := \frac{1}{q_{t,I_t}} \ell(\theta_t; z_t)\bfe_{I_t}.
\end{align*}
For OGD, we use Lemma~\ref{lem:OGD-regret}.
We have $\norm{\hat\nabla_{\theta,t}}_2^2 \leq G$ by assumption.
Therefore,
\[
\E[R_\theta(T)] \leq \frac{G^2}{2}\sum_{t=1}^T \eta_{\theta,t} + \frac{D^2}{2\eta_{\theta, T}}
\]
by Lemma~\ref{lem:OGD-regret}.
For OMD, we use Lemma~\ref{lem:OMD-regret}.
Since $\hat\nabla_{q,t} = \frac{1}{q_{t,I_t}} \ell(\theta_t, z_t) \bfe_{I_t}$, we obtain
\begin{align*}
    \norm{\hat\nabla_{q,t}}_{\nabla^2\Psi(q_t)^{-1}}^2
    % &= \sum_{i=1}^m (\hat\nabla_{q,t})_i^2 q_{t,i}
    &= \frac{\ell(\theta_t, z_t)^2 (\nabla^2\Psi(q_t)^{-1})_{I_t, I_t}}{q_{t, I_t}^2} \\
    &\leq \frac{M^2 (\nabla^2\Psi(q_t)^{-1})_{I_t, I_t}}{q_{t, I_t}^2}.
\end{align*}
Hence we obtain from Lemma~\ref{lem:OMD-regret},
\begin{align*}
    \E[R_q(T; q^*)]
    &\leq \frac{1}{2}\sum_{t=1}^T \eta_{q}
    \E_{I_t} \left[ \frac{(\nabla^2\Psi(q_t))^{-1}_{I_t,I_t}}{q_{t,I_t}^2} \right]
    + \frac{D_\Psi(q^*, q_1)}{\eta_{q}}
\end{align*}
for any saddle point $(\theta^*, q^*)$.
Now the theorem is immediate from \eqref{eq:regret-to-saddle}.

\subsection{Proof of Theorem~\ref{thm:GDRO-EXP3}}
Observe that Algorithm~\ref{alg:GDRO-EXP3} is stochastic no-regret dynamics with OGD, EXP3P, and the same gradient estimators as above.
Without loss of generality, we can assume $M=1$; general case follows scaling the loss functions accordingly.
Using the regret bound in Theorem~\ref{thm:EXP3P}, we have
\[
    \E[\eps_q(T)] \lesssim \sqrt{mT \log m}.
\]
Now the theorem follows from \eqref{eq:regret-to-convergence}.

% % \begin{align*}
% %     \hat\nabla_{\theta, t} := \nabla_\theta \ell(\theta_t; z_t), \quad
% %     \hat\nabla_{q, t} := \frac{1}{q_{t,I_t}} \ell(\theta_t; z_t)\bfe_{I_t}.
% % \end{align*}
% % For OGD, we use Lemma~\ref{lem:OGD-regret}.
% % We have $\norm{\hat\nabla_{\theta,t}}_2^2 \leq G$ by assumption.
% % Therefore, $\E[R_\theta(T)] \leq GD\sqrt{T}$ by Lemma~\ref{lem:OGD-regret}.
% From Theorem~\ref{thm:general}, it suffices to bound the local norm with respect to the entropy regularizer.

% % For Hedge, we use Lemma~\ref{lem:Hedge-regret}.
% Since $\nabla^2 \Psi(q,t) = \diag(q_t^{-1})$, we obtain
% \begin{align*}
%     % \norm{\hat\nabla_{q,t}}_{\nabla^2\Psi(q_t)^{-1}}^2
%     % &= \frac{\ell(\theta_t, z_t)^2}{q_{t, I_t}}, \\
%     \E_{I_t} \left[ \frac{(\nabla^2\Psi(q_t))^{-1}_{I_t,I_t}}{q_{t,I_t}^2} \right]
%     % \E_{I_t}\left[ \norm{\hat\nabla_{q,t}}_{\nabla^2\Psi(q_t)^{-1}}^2 \right]
%     &= \sum_{i=1}^m \Pr(I_t = i) \cdot \frac{1}{q_{t, i}}  = m,
% \end{align*}
% conditioned on $I_1, \dots, I_{t-1}$.
% Hence we obtain \eqref{eq:GDRO-EXP3} from Theorem~\ref{thm:general}.
% By Lemma~\ref{lem:Hedge-regret}, we have
% \begin{align*}
%         \E\left[ R_q(T) \right]
%         \leq \frac{\log m}{\eta_q} + \frac{\eta_q}{2} TmM^2.
% \end{align*}
% Optimizing $\eta_q > 0$, we get $\E[R_q(T)] \leq \sqrt{2}M\sqrt{T m \log m}$.

% Finally, since $\sqrt{x + y} \leq \sqrt x + \sqrt y \leq \sqrt{2(x+y)}$ for $x, y \geq 0$, we obtain
% \[
%     \E[\eps_T] \leq \frac{GD + \sqrt{2}M\sqrt{m \log m}}{\sqrt T}
%     \leq \sqrt{2} \frac{\sqrt{G^2D^2 + 2M^2m \log m}}{\sqrt T},
% \]
% which completes the proof.

\subsection{Proof of Theorem~\ref{thm:GDRO-TINF}}
Observe that Algorithm~\ref{alg:GDRO-TINF} is stochastic no-regret dynamics with OGD, Tsallis-INF, and the same gradient estimators as above.
From Theorem~\ref{thm:general}, it suffices to bound the local norm with respect to the Tsallis entropy regularizer.
Observe that $\nabla^2\Psi(q_t) = \frac{1}{2}\diag(q_t^{-2/3})$.
% \begin{align*}
%     \norm{\hat\nabla_{q,t}}_{\nabla^2\Psi(q_t)^{-1}}^2 &= 2\sum_{i=1}^m (\hat\nabla_{q,t})_i^2 q_{t,i}^{2/3} = \frac{2}{q_{t, I_t}^{1/2}} \ell(\theta_t, z_t)^2. \\
% \end{align*}
Conditioned on $I_1, \dots, I_{t-1}$, we have
\begin{align*}
    \E_{I_t} \left[ \frac{(\nabla^2\Psi(q_t))^{-1}_{I_t,I_t}}{q_{t,I_t}^2} \right]
    &= \sum_{i=1}^m \Pr(I_t = i) \cdot \frac{2}{q_{t, i}^{1/2}} \\
    &\leq 2 \sum_{i=1}^m q_{t, i}^{1/2} \\
    &\leq 2 \sqrt{m}\sqrt{\sum_{i=1}^m q_{t,i}} \\
    &= 2 \sqrt{m}.
\end{align*}
By Theorem~\ref{thm:general}, we obtain \eqref{eq:GDRO-TINF}.

\section{Ommited Proofs in Section~\ref{sec:lb}}\label{app:lb}
In this section, we prove Theorem~\ref{thm:lb}.

We show that the minimax optimality gap is $\Omega(GD/\sqrt{T})$ and $\Omega(M\sqrt{m/T})$, separately.
The first lower bound is immediate from the well-known lower bound of stochastic convex optimization (see, e.g., \cite{Agarwal2012}).
Hence, it suffices to show the second lower bound.

Note that it suffices to show the lower bound for a constant $M$; below we construct instances with $M=2$.
The general case follows by scaling the objective with $M$.
Consider the following instance of group DRO which we construct with respect to an $m$-dimensional vector $\mu = (\mu_1, \dots, \mu_m) \in [0,1]^m$ of Bernoulli biases.
Let $\Theta$ be the unit interval $[0,1]$.
Let
\[
    \ell(\theta; Z) = Z_1 f_1(\theta) + Z_2 f_2(\theta) + Z_3,
\]
where $f_1(\theta) = \delta \theta$ and $f_2(\theta) = \delta(1-\theta)$ are linear functions over the interval $[0,1]$ and $\delta > 0$ is the accuracy parameter determined later.
% Let us pick $I \in [m]$ arbitrarily.
We define a joint distribution $P_i$ of $Z$ as follows: for $i = 1, \dots, m-1$, let
\begin{align*}
    P_i :
    \begin{cases}
        Z_1 = 0 \ \ \text{a.s.}\\
        Z_2 = 1 \ \ \text{a.s.}\\
        Z_3 \sim \text{Ber}(\mu_i)
    \end{cases}
\end{align*}
where $a.s.$ stands for almost surely. For the last group distribution $i = m$, let
\begin{align*}
    P_m :
    \begin{cases}
        Z_1 = 1 \ \ \text{a.s.}\\
        Z_2 = 0 \ \ \text{a.s.}\\
        Z_3 \sim \text{Ber}(\mu_m).
    \end{cases}
\end{align*}
Then,
\[
    \E_{Z \sim P_i}[\ell(\theta; Z)] =
    \begin{cases}
        \delta(1-\theta) + \mu_i & (i=1,\dots,m-1)\\
        \delta \theta + \mu_m.   & (i=m)
    \end{cases}
\]
% and for $i = m$:
% \[
%     \E_{Z \sim P_i}[\ell(\theta; Z)] =
% \]
% It is clear that $\min_{\theta \in \Theta}\max_{i=1}^m \E_{(X,Y) \sim P_i}[\ell(\theta; X, Y)] = 1/2$, which is attained by $\theta = \bfe_{I}$.
% The stochastic zeroth and first order oracle for $P_i$ is given by
% \begin{align*}
%     \caO_0(\theta, i) &= \delta(1-\theta) + Z_3, \quad  \caO_1(\theta, i) = -\delta,
% \end{align*}
% for $i = 1, \dots, m-1$ and
% where $Z_3$ is a Bernoulli variable.
The information of an outcome of a single stochastic oracle call to $P_i$ is no more than that of a single sample of the $i$th Bernoulli distribution $\mathrm{Ber}(\mu_i)$.

%%%%%%%%%%%%%%%%%%%%
Let us fix $\hat\theta \in \caA_T$ arbitrarily.
Let $\caP_0$ be the set of distributions $(P_i)$ constructed as above with
\[
    \mu^0 = (1/2, 1/2, \dots, 1/2).
\]
% Furthermore, we define $\caP_0$ as a $m$-set of distributions where $X \sim \Ber(1/2)$ for all $P \in \caP$.
It is clear that $\min_{\theta^* \in \Theta}\max_{P \in \caP_0} \E_{Z \sim P}[\ell(\theta^*; Z)] = 1/2 + \delta/2$, which is attained by $\theta^* = 1/2$.
We denote by $Q_0$ the distribution of the outcomes of stochastic oracles observed by $\hat\theta$ under $\caP_0$.
Furthermore, let $T_i$ be the expected number of queries to the $i$th stochastic oracle made by $\hat\theta$ under $\caP_0$.
Since $\hat\theta$ makes $T$ queries in total, there exists $i^* \neq m$ such that $T_{i^*} \leq \frac{T}{m-1}$.
% Without loss of generality, we can assume $i^* = 2$.
Let $\caP_1$ be the set of distributions constructed as above with
\[
    \mu^1 = (1/2, 1/2, \dots, 1/2, \stackrel{i^*}{1/2 + \delta}, 1/2, \dots, 1/2).
\]

\begin{lemma}\label{lem:separation-R}
    \[
    \max\{ R(\theta, \ell, \caP_0), R(\theta, \ell, \caP_1) \} \geq \delta/4
    \]
for any $\theta$.
\end{lemma}
\begin{proof}
We consider two different cases: $\theta \geq \frac{3}{4}$ and $\theta < \frac{3}{4}$.

For $\theta \geq 3/4$, we have $R(\theta, \ell, \caP_0) \geq \frac{\delta}{4}$ since
\begin{align*}
    \max_{P \in \caP_0} \E_{Z \sim P}[\ell(\theta^*; Z)] = \max_{P \in \caP_0} \E_{Z \sim P}[\ell(1/2; Z)] = \delta/2 + 1/2,
\end{align*}
while
\begin{align*}
    \max_{P \in \caP_0} \E_{Z \sim P}[\ell(\theta; Z)] =  \E_{Z \sim P_m}[\ell(\theta; Z)] \geq \delta/2 + \delta/4 + 1/2.
\end{align*}
For the other case, $\theta < 3/4$, we show that $R(\theta, \ell, \caP_1) \geq \frac{\delta}{4}$. This holds as
\begin{align*}
    \max_{P \in \caP_1} \E_{Z \sim P}[\ell(\theta^*; Z)] = \max_{P \in \caP_1} \E_{Z \sim P}[\ell(1; Z)] = \delta + 1/2,
\end{align*}
while
\begin{align*}
    \max_{P \in \caP_1} \E_{Z \sim P}[\ell(\theta; Z)] =  \E_{Z \sim P_{i^*}}[\ell(\theta; Z)] \geq \delta + \delta/4 + 1/2.
\end{align*}
This completes the proof.
\end{proof}

%%%%%%%%%%%%%%%%%%%%%%%%%%%%%%%
% We will use LeCam's two-point method.
We denote by $Q_1$ the distribution of the outcomes of stochastic oracles observed by $\hat\theta$ under $\caP_1$. By Lecam's two-point method,
% \begin{lemma}[LeCam's two-point method]\label{lem:lecam}
    \[
        \inf_{\hat\theta}\sup_{\caP} \E[R(\hat\theta, \ell, \caP)]
        \geq \frac{\delta}{2}\left( 1 - \dTV(Q_0,Q_1) \right),
    \]
    where the expectation is taken over the outcomes of the stochastic oracle and $\dTV$ denotes the total variation distance.
% \end{lemma}
We proceed to bound the right-hand side.
By the Pinsker inequality,
\[
    \dTV(Q_0,Q_1)^2 \lesssim \DKL(Q_0 \mid\mid Q_1),
\]
where $\DKL$ denotes the Kullback-Leibler divergence.
By the standard computation, we can show the following.
\begin{lemma}\label{lem:KL}
\begin{align*}
    \DKL(Q_0 \mid\mid Q_1) \lesssim \delta^2 T_{i^*}  \leq \frac{\delta^2 T}{m-1}.
\end{align*}
    for $\delta \in (0, 1/4)$,
\end{lemma}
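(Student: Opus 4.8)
The plan is to apply the standard \emph{divergence decomposition} for adaptive sampling protocols (the chain rule for relative entropy that underlies bandit lower bounds) to reduce $\DKL(Q_0 \mid\mid Q_1)$ to a single per-oracle computation, and then estimate one Bernoulli KL divergence.

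First I would set up the observation process formally. The algorithm $\hat\theta$ generates a transcript of (oracle index, outcome) pairs $(I_1, O_1), \dots, (I_T, O_T)$, where $I_t \in [m]$ is the oracle queried at step $t$ (a function of the past transcript and the algorithm's internal coins) and $O_t$ is the returned sample $z \sim P_{I_t}$. Both $Q_0$ and $Q_1$ are laws over such transcripts; they use the \emph{same} querying rule and differ only through the oracle responses. Factoring each law into its conditionals via the chain rule, the querying-rule factors coincide under $Q_0$ and $Q_1$ and cancel, leaving
\begin{equation*}
    \DKL(Q_0 \mid\mid Q_1) = \sum_{i=1}^m \E_0[N_i]\, \DKL\!\left(P_i^0 \mid\mid P_i^1\right),
\end{equation*}
where $N_i$ is the number of queries to oracle $i$, $\E_0$ is the expectation under $\caP_0$, and $P_i^0, P_i^1$ denote the response distribution of oracle $i$ under the two instances.

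Next I would exploit the construction. By design $\caP_0$ and $\caP_1$ agree on every oracle except $i^*$, so $\DKL(P_i^0 \mid\mid P_i^1) = 0$ for $i \neq i^*$ and the sum collapses to $\E_0[N_{i^*}]\,\DKL(P_{i^*}^0 \mid\mid P_{i^*}^1) = T_{i^*}\,\DKL(P_{i^*}^0 \mid\mid P_{i^*}^1)$, using $T_{i^*} = \E_0[N_{i^*}]$. Since the coordinates $Z_1, Z_2$ of the response are deterministic constants that are identical under both instances and only $Z_3$ carries randomness, these coordinates contribute nothing to the divergence and the response KL reduces to the Bernoulli KL: $\DKL(P_{i^*}^0 \mid\mid P_{i^*}^1) = \DKL(\Ber(1/2) \mid\mid \Ber(1/2+\delta))$. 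A direct computation gives $\DKL(\Ber(1/2) \mid\mid \Ber(1/2+\delta)) = -\tfrac12\log(1 - 4\delta^2)$, and for $\delta \in (0,1/4)$ we have $4\delta^2 \le 1/4 \le 1/2$, so the elementary bound $-\log(1-x) \le 2x$ on $[0,1/2]$ yields $\DKL(\Ber(1/2)\mid\mid \Ber(1/2+\delta)) \le 4\delta^2 \lesssim \delta^2$. Combining, $\DKL(Q_0 \mid\mid Q_1) \lesssim \delta^2 T_{i^*} \le \frac{\delta^2 T}{m-1}$, the final step using the choice of $i^* \neq m$ with $T_{i^*} \le T/(m-1)$.

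The main obstacle is making the divergence decomposition rigorous in the presence of the algorithm's adaptivity and internal randomness: one must verify that, after conditioning on the observed history, the step-$t$ query index $I_t$ has the same conditional law under $Q_0$ and $Q_1$ (so those factors cancel), while $O_t$ is conditionally distributed as $P_{I_t}^0$ versus $P_{I_t}^1$. This is a careful but routine application of the chain rule for relative entropy, combined with the observation that the returned sample reveals no more than the underlying Bernoulli bit. The remaining Bernoulli estimate is entirely elementary.
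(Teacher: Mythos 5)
Your proposal is correct and takes essentially the same approach as the paper's proof: both decompose $\DKL(Q_0 \mid\mid Q_1)$ via the chain rule over the adaptive query sequence (your per-oracle divergence decomposition is the paper's per-step chain rule plus data-processing, grouped by oracle index), collapse the sum to the single differing oracle to obtain $T_{i^*}\,\DKL(\Ber(1/2) \mid\mid \Ber(1/2+\delta))$, and bound this Bernoulli divergence by $O(\delta^2)$ using an elementary logarithm inequality. The only cosmetic differences are that you note the response KL \emph{equals} the Bernoulli KL (since $Z_1, Z_2$ are deterministic and shared) rather than invoking data processing as an inequality, and you get the constant $4\delta^2$ where the paper gets $8\delta^2$.
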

Thus, setting $\delta = O(\sqrt{m/T})$, we obtain
\[
    \inf_{\hat\theta\in\caA_T} \sup_{\ell \in \caL, \caP} R(\hat\theta, \ell, \caP) \gtrsim \sqrt{\frac{m}{T}},
\]
which completes the proof of Theorem~\ref{thm:lb}.

% \subsection{Proof of Equation~\eqref{eq:separation-R}}

\subsection{Proof of Lemma~\ref{lem:KL}}
% \begin{proof}[Proof of Lemma~\ref{lem:KL}]
Now we prove Lemma~\ref{lem:KL} for the completeness.
Let $o_t$ be the outcome of the $t$th query to the stochastic oracle.
We will use the shorthand notation $o_{1:t}$ to denote the outcomes $(o_1, \dots, o_t)$ up to the $t$th queries.
Let $I_t \in [m]$ be the index of stochastic oracles that $\hat\theta$ queries in the $t$th round.
Note that $I_t$ is determined by $o_{1:t-1}$.
Then, we have
\begin{align*}
   \DKL(Q_0 \mid\mid Q_1)
   &= \sum_{t = 1}^T \DKL(Q_0(o_t \mid o_{1:t-1}) \mid\mid Q_1(o_t \mid o_{1:t-1})) \tag{chain rule}\\
   &\leq \sum_{t = 1}^T \E_{o_{1:t-1} \sim Q_0}\left[\DKL(\Ber(\mu^0_{I_t}) \mid\mid \Ber(\mu^1_{I_t})) \right] \tag{data-processing inequality} \\
   &= \sum_{t = 1}^T \E_{o_{1:t-1} \sim Q_0}\left[\bfone[I_t = i^*] \DKL(\Ber(1/2) \mid\mid \Ber(1/2 + \delta)) \right] \\
   &= T_{i^*} \cdot \DKL(\Ber(1/2) \mid\mid \Ber(1/2+\delta)).
\end{align*}
Furthermore, for $\delta \in (0, 1/4)$,
% $\DKL(\Ber(1/2+2\delta) \mid\mid \Ber(1/2)) = O(\delta^2)$ (see Lemma~\ref{lem:KL-Ber} for the proof).
\begin{align*}
    \DKL(\Ber(1/2) \mid\mid \Ber(1/2+\delta))
    &= \frac{1}{2} \log \frac{1/2}{1/2 + \delta} + \frac{1}{2}\log \frac{1/2}{1/2 - \delta} \\
    &= \frac{1}{2} \log \left(1 - \frac{2\delta}{1 + 2\delta}\right) + \frac{1}{2}\log \left(1 + \frac{2\delta}{1 - 2\delta} \right) \\
    &\leq - \frac{\delta}{1 + 2\delta} + \frac{\delta}{1 - 2\delta} \\
    &\leq \frac{4\delta^2}{(1+2\delta)(1-2\delta)} \leq 8\delta^2.
\end{align*}
This completes the proof.

\section{Algorithm of Sagawa~et~al. for group DRO}\label{app:sagawa}
Here we present the algorithm by~\cite{Sagawa2020} for group DRO.
Algorithm~\ref{alg:Sagawa-et-al} shows the pseudocode.
In each iteration $t$, the algorithm picks group index $i_t \in [m]$ uniformly at random and obtains an i.i.d.~sample $z \sim P_{i_t}$.
% Then, the algorithm updates as
% \begin{align*}
%     \theta_{t+1} &:= \proj_\Theta(\theta_t - m\eta_{\theta, t}\nabla_\theta\ell(\theta_t; z)), \\
%     \tilde q_{t+1} &:= q_t\exp(m\eta_{q}\ell(\theta_t; z)\bfe_{i_t}), \\
%     q_{t+1} &:= \frac{\tilde q_{t+1}}{\sum_i \tilde q_{t+1, i}},
% \end{align*}
% where $\proj_\Theta$ is the orthogonal projection onto $\Theta$ and the multiplication and exponential are applied entry-wise.
Then, the algorithm performs one step of projected gradient descent and Hedge on $\theta_t \in \Theta$ and $q_t \in \Delta_m$, respectively, where the gradients are estimated with $i_t$ and $z$.
Note that $q_t$ is only used for the scaling factor of the gradient estimator.
% The iteration complexity is the sum of $O(m+n)$ time to update $\theta_t, q_t$ and the complexity of a single projection onto $\Theta$.
In each iteration, the algorithm performs a single orthogonal projection onto $\Theta$ and $O(m+n)$ operations to update $\theta_t, q_t$.

\begin{algorithm}
    \caption{Algorithm of Sagawa~et~al.}\label{alg:Sagawa-et-al}
    \begin{algorithmic}[1]
        \REQUIRE initial solution $\theta_1 \in \Theta$, number of iteration $T$, and step sizes $\eta_{\theta, t} > 0$ ($t \in [T]$), $\eta_q > 0$.
        \STATE Let $q_t = (1/m, \dots, 1/m)$.
        \FOR{$t = 1, \dots, T$}
            \STATE Sample $i_t \sim [m]$ uniformly at random.
            \STATE Call the stochastic oracle to obtain $z \sim P_{i_t}$.
            \STATE $\theta_{t+1} \gets \proj_\Theta(\theta_t - mq_{t,i_t}\eta_{\theta, t}\nabla_\theta\ell(\theta_t; z))$
            \STATE $\tilde q_{t+1} \gets q_t\exp(m\eta_{q}\ell(\theta_t; z)\bfe_{i_t})$ and $q_{t+1} \gets \frac{\tilde q_{t+1}}{\sum_i \tilde q_{t+1, i}}$.
        \ENDFOR
        \RETURN $\frac{1}{T}\sum_{t=1}^T \theta_t$.
    \end{algorithmic}
\end{algorithm}

In the view of no-regret dynamics, the main difference between our algorithms and \citet{Sagawa2020} is the gradient estimators; see Table~\ref{tab:no-regret}.

\newcommand{\gradcellwidth}{10em}
\newcommand{\Sagawagradest}{%
\parbox{\gradcellwidth}{%
\begin{alignat*}{3}
 \hat\nabla_{\theta, t} &:= m q_{t,i} \nabla_\theta \ell(\theta; z), &\quad
 \hat\nabla_{q, t} &:= m \ell(\theta_t; z)\bfe_i. &\quad (i \sim [m], z \sim P_i)
\end{alignat*}
}}
\newcommand{\ourgradest}{%
\parbox{\gradcellwidth}{%
\begin{alignat*}{3}
    \hat\nabla_{\theta, t} &:= \nabla_\theta \ell(\theta_t; z), &\quad
    \hat\nabla_{q, t} &:= \frac{1}{q_{t,i}} \ell(\theta_t; z)\bfe_i. & \quad (i \sim q_t, z \sim P_i)
\end{alignat*}%
}}
\begin{table*}[h]
    \caption{Algorithms as stochastic no-regret dynamics}\label{tab:no-regret}
    \centering
    \setlength{\abovedisplayskip}{0pt}
    \small
    \begin{tabular}{c|ccc}
        & $\caA_\theta$ & $\caA_q$ & $\hat\nabla_{\theta,t}, \hat\nabla_{q,t}$ \\ \hline
      Algorithm~\ref{alg:Sagawa-et-al} & OGD & Hedge &  \Sagawagradest \\
      Algorithm~\ref{alg:GDRO-EXP3} & OGD & EXP3P &  \ourgradest \\
      Algorithm~\ref{alg:GDRO-TINF} & OGD & Tsallis-INF & \ourgradest
    \end{tabular}
\end{table*}

\section{Additional experiments}\label{app:experiments}
\subsection{Experiment with synthetic dataset for convex regime}

\paragraph{Dataset.}
To observe the performance of the algorithms over the regime of high-dimension model parameters and the larger number of groups, we also conducted experiments using the following synthetic instances.
First, we set $n=500$ and varied $m \in \{10, 50, 100\}$.
For each group $i \in [m]$, we generated the true classifier $\theta^*_i \in \R^n$ from the uniform distribution over the unit sphere in $\R^n$.
The $i$th group distribution $P_i$ was the empirical distribution of 1,000 data points,  where each data point $(a, b)$ was drawn as $a \sim N(0, I_n)$ and $b = \sign(a^\top \theta^*_i)$ with probability $0.9$ and $b = -\sign(a^\top \theta^*_i)$ with probability $0.1$.
We trained the linear model with the hinge loss function.
Finally, the group-DRO objective is
\[
    \max_{i = 1}^m  \E_{(a, b) \sim P_i} [\ell(\theta; a, b)].
\]
The feasible region is the Euclidean ball of radius $D=10$.

\begin{figure*}[h]
    \pgfplotsset{
    xlabel={Iteration $T$},
    legend style={font=\small}
    }
    \centering
    \vspace{-1em}
    \begin{tikzpicture}
        \begin{groupplot}[group style={group size=3 by 1, ylabels at=edge left, horizontal sep=4em},
            width=.34\linewidth,
            ylabel=Objective,
            legend columns=-1,
            legend to name=named2]
            \nextgroupplot[title={$m=10$}, xmode=log,ymode=log]
                \addplot table {figdata/synthetic_m10_Sagawa_et_al.csv};
                \addplot table {figdata/synthetic_m10_EXP3P.csv};
                \addplot table {figdata/synthetic_m10_TINF.csv};
            \nextgroupplot[title={$m=50$}, xmode=log,ymode=log]
                \addplot table {figdata/synthetic_m50_Sagawa_et_al.csv};
                \addplot table {figdata/synthetic_m50_EXP3P.csv};
                \addplot table {figdata/synthetic_m50_TINF.csv};
            \nextgroupplot[title={$m=100$}, xmode=log,ymode=log]
                \addplot table {figdata/synthetic_m100_Sagawa_et_al.csv};
                \addplot table {figdata/synthetic_m100_EXP3P.csv};
                \addplot table {figdata/synthetic_m100_TINF.csv};
            % legend
            \legend{Sagawa~et~al., GDRO-EXP3, GDRO-TINF},
        \end{groupplot}
    \end{tikzpicture}
    \ref{named2}
    \caption{Results on the synthetic dataset for the convex regime. Both axes are log-scale}\label{fig:synthetic}
\end{figure*}

\paragraph{Result.}
In Figure~\ref{fig:synthetic}, we plot the objective values of the averaged iterate against the number of iterations.
For all the values of $m$, our algorithms (especially \textsc{GDRO-EXP3}) consistently achieve smaller loss values faster than the known algorithm.
The performance gap between our algorithms and the known algorithm increased as $m$ grows, which verifies that our algorithms have better dependence on $m$ in the convergence rate.

%%%%%%%%%%%%%%%%%%%%%%%%%%%%%%%%%%%%%%%%%%%%%%%%%%%%%%%%% Moving DL experiments from main text
%%%%%%%%%%%%%%%%%%%%%%%%%%%%%%%%%%%%%%%%%%%%%%%%%%
\subsection{Experiments in the deep learning regime}\label{sec:dlexp}
%%%%%%%%%%%%%%%%%%%%%%%%%%%%%%%%%%%%%%%%%%%%%%%%%%
Our convergence analysis focuses on the convex regime.
However, algorithms designed for the convex regime often work well even for the deep learning regime.
Here, we compare our algorithms with the known algorithms in the deep learning regime.

\begin{table*}[!htb]
    \caption{Worst-group test performance of algorithms for Wilds datasets.  S.~et~al., \textsc{EXP}, and \textsc{TINF} denote the algorithm of \citet{Sagawa2020}, \textsc{GDRO-EXP}, and \textsc{GDRO-TINF}, respectively. The value format is mean $\pm$ standard deviation. The best mean in each row is in bold.\label{tab:wilds}}
    \centering
    \begin{adjustbox}{max width=1.\textwidth}
    \begin{NiceTabular}{@{}ll@{}l|c|c|c|c@{}}
    \CodeBefore
        \rowcolor{white}{1-2}
        \rowcolors[gray]{3}{0.8}{}[cols=3-7,restart]
    \Body
    \toprule
    Dataset & Metric & Heuristics & ERM & S.~et~al. & \textsc{EXP} & \textsc{TINF}  \\
    %%%
    \midrule
    \Block{3-1}{Waterbirds} & \Block{3-1}{accuracy}
    & standard training & 58.9  $\pm$ 3.21 & 75.5 $\pm$ 1.66 & 70.6 $\pm$ 4.07 & \bf 77.4 $\pm$ 2.31 \\
    & & penalty &  24.3  $\pm$ 2.43 & \bf 84.9 $\pm$ 1.19 & 77.4 $\pm$ 0.72 & 79.0 $\pm$ 0.71 \\
    & & early stop + penalty &  9.2 $\pm$ 1.34 & 86.2 $\pm$ 0.96 & 85.4 $\pm$ 1.44 & \bf 87.5 $\pm$ 2.68 \\
%     %%%
    \midrule
    \Block{2-1}{MultiNLI} & \Block{2-1}{accuracy}
    & standard training &  65.1 $\pm$ 1.22 & 66.4 $\pm$ 1.84 & \bf 72.4 $\pm$ 0.83 & 72.2 $\pm$ 2.47 \\
    & & early stop + penalty & 66.8 $\pm$ 1.56 & 77.7 $\pm$ 3.69 & 76.9 $\pm$ 2.25 & \bf 78.1 $\pm$ 1.63
 \\
%     %%%
    \midrule
    CIVIL-Comments & accuracy
    & standard training & 58.4	$\pm$ 1.29  & 67.5 $\pm$ 3.43 & 62.5 $\pm$ 1.47 & \bf 68.8 $\pm$ 1.34 \\
%     %%%
    \midrule
    FMOW-Wilds & accuracy
    & standard training & \bf 33.4 $\pm$ 0.53 & 31.9 $\pm$ 1.22 & 31.7	$\pm$ 1.21  & 33.31 $\pm$ 0.83
 \\
%     %%%
    \midrule
    PovertyMAP-Wilds & U/R Pearson R 
    & standard training & 0.413 $\pm$ 0.025 & 0.422 $\pm$ 0.031 &  \bf 0.491	$\pm$ 0.037 & 0.439 $\pm$ 0.046 \\
    \bottomrule
    \end{NiceTabular}
    \end{adjustbox}
\end{table*}

\begin{table*}[!htb]
    \caption{Average test performance of algorithms for Wilds datasets.  S.~et~al., \textsc{EXP}, and \textsc{TINF} denote the algorithm of \citet{Sagawa2020}, \textsc{GDRO-EXP}, and \textsc{GDRO-TINF}, respectively. The value format is mean $\pm$ standard deviation. The best mean in each row is in bold.\label{tab:wilds_avgacc}}
    \centering
    \begin{adjustbox}{max width=1.\textwidth}
    \begin{NiceTabular}{@{}cc@{}c|c|c|c|c@{}}
    \CodeBefore
        \rowcolor{white}{1-2}
        \rowcolors[gray]{3}{0.8}{}[cols=3-7,restart]
    \Body
    \toprule
    Dataset & Metric & Heuristics & ERM & S.~et~al. & \textsc{EXP} & \textsc{TINF}  \\
    %%%
    \midrule
    \Block{3-1}{Waterbirds} & \Block{3-1}{accuracy}
    & standard training & \bf 97.2  $\pm$ 0.35 & 90.5 $\pm$ 0.21 & 91.3 $\pm$ 0.23 & 91.1 $\pm$ 0.52 \\
    & & penalty &  \bf 94.5  $\pm$ 0.78 & 92.2 $\pm$ 0.42 & 90.0 $\pm$ 0.40 & 90.9 $\pm$ 1.50 \\
    & & early stop + penalty &  \bf 92.4 $\pm$ 0.18 & 90.0 $\pm$ 0.70 & 91.4 $\pm$ 0.74 & 90.6 $\pm$ 1.42 \\
%     %%%
    \midrule
    \Block{2-1}{MultiNLI} & \Block{2-1}{accuracy}
    & standard training & \bf 81.8	$\pm$	0.45	&	82.0	$\pm$	2.78	&	80.4	$\pm$	1.20	&	81.4	$\pm$	0.79	\\
    & & early stop + penalty & \bf 82.4	$\pm$	0.93	&	81.4	$\pm$	2.30	&	81.3	$\pm$	0.84	&	82.0	$\pm$	0.85	\\
%     %%%
    \midrule
    CIVIL-Comments & accuracy
    & standard training & \bf 92.7	$\pm$	0.61	&	90.2	$\pm$	0.49	&	91.2	$\pm$	0.58	&	90.2	$\pm$	0.90	\\
%     %%%
    \midrule
    FMOW-Wilds & accuracy
    & standard training & \bf 52.1	$\pm$	0.27	&	51.7	$\pm$	0.39	&	51.2	$\pm$	0.39	&	51.9	$\pm$	0.46	\\
%     %%%
    \midrule
    PovertyMAP-Wilds & U/R Pearson R 
    & standard training & 0.725	$\pm$	0.019	&	0.711	$\pm$	0.047	&	\bf  0.814	$\pm$	0.021	&	0.755	$\pm$	0.031	\\
    \bottomrule
    \end{NiceTabular}
    \end{adjustbox}
\end{table*}

\paragraph{Dataset.}
We used Wilds~\citep{wilds2021}, which consists of various real-world data for machine learning tasks and various baseline optimization algorithms.
Each task specifies the loss function, performance metric, train-test data split, and neural net architecture.
We used Waterbirds, CIVIL-Comments, FMoW-Wilds, PovertyMAP-Wilds from Wilds.
For example, Waterbirds consists of images of two kinds of birds (landbirds and waterbirds) with different backgrounds (land and water) and the task is to predict the types of birds in images.
For further detail, see Appendix~\ref{sec:expdetails} and their original paper~\citep{wilds2021}.

\paragraph{Algorithm.}
We implemented \textsc{GDRO-EXP3P} and \textsc{GDRO-TINF} within the Python framework of Wilds.
As baseline methods, we used empirical risk minimization (ERM) and the algorithm of \citet{Sagawa2020} provided by Wilds.
We used the standard neural network architecture specified by Wilds for our learning models;
for example, ResNet50 for Waterbirds and BERT for MultiNLI and CIVIL-Comments, etc.
For $\theta$-player algorithms, we used the default optimizer with default hyperparameters in Wilds for all algorithms.
We used the official data split provided by Wilds.
We trained each model with the default number of epochs (e.g., 200 epochs for Waterbirds) in Wilds and report the performance of the best iterate.

\paragraph{Optimization heuristics.}
\citet{Sagawa2020} proposed several optimization heuristics, which were shown to improve the performance in their Waterbirds experiment.
To complement our experiments, we also report the results using these optimization heuristics in the Waterbirds and MultiNLI experiments.
In particular, we run Vanilla SGD (standard), $\ell_2$-regularization (penalty), and both early stopping and $\ell_2$-regularization (early stop+penalty).

\paragraph{Step sizes.}
For ERM and the algorithm \citet{Sagawa2020}, we used the default setting provided by Wilds.
Our algorithms used the following settings.
For $\theta$-player algorithms, we used the default optimizer with default hyperparameters in Wilds.
For $q$-player algorithms (EXP3P and TINF), we used the default step size $\eta_q = 0.01$ for the algorithm of \citet{Sagawa2020} in Wilds.

\paragraph{Mini-batch.}
We found that the following mini-batch strategy yielded the best performance.
Each mini-batch consists of $B$ samples constructed as follows:
% sample group index $i$ according to the sampling strategy of the $q$-player and then sample data point from the $i$th group distribution $P_i$. We repeat this for $B$ times independently. 
A batch of $B$ elements is sampled from the training dataset according to the sampling strategy of the $q$-player. 
Corresponding to the indices of the sampled groups in the batch, data points are selected at random.
After constructing the mini-batch, we then update the model parameter $\theta_t$ and group weight $q_t$ using the gradient and loss averaged over the mini-batch for each group separately.
We set $B$ to the default mini-batch size provided by Wilds (e.g., $B=128$ for Waterbirds) in our experiments.

\subsubsection{Results}
We report the worst group and average test performance for each dataset in Wilds in Tables ~\ref{tab:wilds} and ~\ref{tab:wilds_avgacc} respectively.
Here, the mean and standard deviation (stddev) are computed from three independent runs with different random seeds.
In almost all datasets, \textsc{GDRO-EXP3P} and \textsc{GDRO-TINF} consistently achieved the best worst-group accuracy.
Although the performances of the algorithms except ERM are relatively close, remark that we did not tune the step size for \textsc{GDRO-EXP} and \textsc{GDRO-TINF} but used the default step size in Wilds, which is tuned for the algorithm of \citet{Sagawa2020}.

%%%%%%%%%%%%%%%%%%%%%%%%%%%%%%%%%%%%%%%%%%%%%%%%%%%%%%%%% End of Moved from maintext

%%%%%%%%%%%%%%%%%%%%%%%%%%%%%%%%%%%%%%%%%%%%
\subsection{Details of experiments in deep learning regime}\label{sec:expdetails}
%%%%%%%%%%%%%%%%%%%%%%%%%%%%%%%%%%%%%%%%%%%%
We summarize the characteristics of Wilds tasks we used in our experiments in the deep learning regime.
The full details can be found in \citet{wilds2021}.

\paragraph{Waterbirds.}
The Waterbirds dataset consists of images of birds of two kinds (waterbirds and landbirds) with different backgrounds (land and water).
The task is to predict the type of birds in images.
There are $m=4$ groups corresponding to the combinations of birds and backgrounds.
The number of training examples is 4795 in total and 56 in the smallest group (waterbirds on land).
We used ResNet50 as our learning model.
We used the torch-vision implementation of ResNet50 as suggested in Wilds.

\paragraph{MultiNLI.}
The MultiNLI dataset is a natural language dataset consisting of labeled sentences.
We used the modified version of MultiNLI provided by \citet{Sagawa2020}\footnote{\url{https://github.com/kohpangwei/group_DRO}}.
Each image is assigned to $m=6$ groups corresponding to the combination of labels $\{\text{entailed, neutral, contradictory}\}$ and the existence of negation words $\{\text{no negation, negation}\}$.
The training set contains 206175 examples with 1521 examples in the smallest group (entailment with negations).
We used Hugging Face pytorch-transformers implementation of the BERT with pre-trained weights.

\paragraph{CIVIL-Comments.}
CIVIL-Comments is a natural language dataset of distribution shifts with different demographic identities.
The task is to predict whether a given text is toxic or not.
There are $m=2$ groups (toxic or not).
The learning model is BERT same as MultiNLI.

\paragraph{FMoW-wilds.}
FMoW-wilds consists of RGB satellite images of 224$\times$224 pixels.  Each image has its label (use or land) and geographical region (Africa, the Americas, Oceania, Asia, or Europe).
The task is to predict the label of a given image.
There are $m=8$ groups (the year where each image was taken).
The learning model is DenseNet121.

\paragraph{PovertyMAP-wilds.}
PovertyMAP-Wilds consists of LandSat satellite image with 8 channels (resized to 224 x 224 pixels) with a label of real-valued asset wealth index.
The task is to predict the label of a given image.
There are $m=8$ groups (the country where each image was taken).
The learning model is Resnet18ms.

\end{document}